\newtheorem{theorem}{Theorem}
\numberwithin{theorem}{section}
\newtheorem{assumption}{Assumption}
\numberwithin{assumption}{section}
\newtheorem{example}{Example}
\newtheorem{lemma}{Lemma}
\numberwithin{lemma}{section}
\newcommand{\known}{\text{known}}
\newcommand{\plugin}{\text{plugin}}
\newcommand{\rel}{\text{rel}}
\newcommand{\bayes}{\text{bayes}}
\newcommand{\LR}{\text{LR}}
\DeclareMathOperator{\E}{\mathbb{E}}
\begin{document}

\title{Monitoring machine learning (ML)-based risk prediction algorithms in the presence of confounding medical interventions}

\author{
	Jean Feng$^{1}$, Alexej Gossmann$^{2}$, Gene Pennello$^{2}$,
Nicholas Petrick$^{2}$,\\ Berkman Sahiner$^{2}$, Romain Pirracchio$^{1}$
}
\date{
$^1$University of California, San Francisco\\
$^2$U.S. Food and Drug Administration, Center for Devices and Radiological Health
}

\maketitle

\begin{abstract}
	Performance monitoring of machine learning (ML)-based risk prediction models in healthcare is complicated by the issue of confounding medical interventions (CMI):
	when an algorithm predicts a patient to be at high risk for an adverse event, clinicians are more likely to administer prophylactic treatment and alter the very target that the algorithm aims to predict.
	A simple approach is to ignore CMI and monitor only the untreated patients, whose outcomes remain unaltered.
	In general, ignoring CMI may inflate Type I error because (i) untreated patients disproportionally represent those with low predicted risk and (ii) evolution in both the model and clinician trust in the model can induce complex dependencies that violate standard assumptions.
	Nevertheless, we show that valid inference is still possible if one monitors conditional performance and if either conditional exchangeability or time-constant selection bias hold.
	Specifically, we develop a new score-based cumulative sum (CUSUM) monitoring procedure with dynamic control limits.
	Through simulations, we demonstrate the benefits of combining model updating with monitoring and investigate how over-trust in a prediction model may delay detection of performance deterioration.
	Finally, we illustrate how these monitoring methods can be used to detect calibration decay of an ML-based risk calculator for postoperative nausea and vomiting during the COVID-19 pandemic.
\end{abstract}

\maketitle

\section{Introduction}

After a machine learning (ML)-based system is deployed in clinical practice, real-world monitoring of the algorithm for potential performance degradation is necessary for mitigating risk and is an important aspect of good machine learning practice (GMLP) \citep{US_Food_and_Drug_Administration2021-tz}.
Various methods for performance monitoring are available \citep{Feng2022-mk, Kahn1996-yx}, such as those from statistical process control.
All of these procedures assume an ideal data setting in which the prediction target is observed.
However, the data available for monitoring a ML-based risk prediction algorithm are often subject to confounding medical interventions (CMI), because predictions from the algorithm can alter the very outcome that it aims to predict \citep{Paxton2013-pv, Lenert2019-uf, Perdomo2020-cz}.

As a motivating example, consider the Targeted Real-time Early Warning System (TREWS) sepsis risk prediction algorithm \citep{Adams2022-rm}, which estimates the probability of a patient developing septic shock if they only receive standard of care (SOC) and no additional interventions.
This algorithm was recently shown to reduce in-hospital mortality and organ failure rates.
The alert system increased the propensity and speed at which clinicians submitted their first antibiotic order \citep{Henry2022-pg}.
Moreover, the authors found that clinicians' likelihood to interact with the TREWS alert system depended on their previous interactions, and hypothesize that clinician trust will continue to evolve with increased exposure to ML-based systems.
Monitoring the performance of the TREWS algorithm is especially important because (i) it depends on electronic health record (EHR) data, which is prone to distribution shifts, and (ii) overuse of antibiotics has major negative consequences.
To evaluate the model's predictions for patient outcomes under SOC, a potential solution is to restrict our attention to only those patients who received SOC, as the counterfactual outcomes for the patients who received antibiotics are unknown.
However, in the likely scenario where high-risk patients are preferentially selected to receive an intervention, procedures that monitor marginal performance measures (e.g., misclassification rate or AUC) without adjusting for CMI are biased, due to dependent censoring of the outcome (Figure~\ref{fig:naive} of the Appendix).
This can lead to inflated false alarm rates and/or unnecessarily long detection delays.

In the offline setting, one can try to address the mismatch between the ``SOC-only'' and general target population by reweighting the data by the inverse of their treatment propensities.
In the sequential setting, the closest work along these lines is \citet{Sun2014-yh}, which combines inverse censoring weights with a cumulative sum (CUSUM) algorithm to monitor survival outcomes.
However, proper error rate control is contingent on knowing the exact weights, which is unlikely to hold in our setting.
Moreover, we may not even be able to accurately estimate the propensity weights, because it is difficult to anticipate how clinician trust in an algorithm will evolve over time.
Finally, even with access to exact weights, near-violations of the positivity condition (i.e. weights close to zero) can drastically slow down asymptotic convergence of these procedures, thereby inflating the false alarm rate.

Given the difficulties of monitoring \textit{marginal} performance measures using a propensity-based approach, we propose to monitor \textit{conditional} measures of performance, which we show are independent of treatment propensities under one of two ignorability conditions.
The first is the well-known conditional exchangeability assumption \citep{Rubin1976-of}.
The second is the assumption of time-constant selection bias, which has not yet been discussed in the literature to our knowledge.
If either condition holds, we can ignore CMI, use a ``standard'' monitoring procedure to analyze the SOC-only data, and avoid estimating treatment propensities altogether.
We pay special attention to the model calibration, a popular conditional performance measure and one of the most common types of performance deterioration in practice \citep{Hickey2013-qm, Davis2017-bl}.

In addition, we address two challenges in analyzing performance of ML-based risk prediction models.
First, the sequence of predictors among the SOC-only patients can be highly nonstationary, because the ML algorithm and/or the clinician's interactions with the ML algorithm can evolve over time.
Second, the exact performance characteristics of an ML algorithm are often unknown upfront and must be estimated, whereas many monitoring algorithms were originally designed for settings where the pre-change data distribution is known exactly (e.g. industrial manufacturing).
Although there are procedures that partially address these challenges \citep{Dette2020-mr, Zeileis2007-la, Gombay2017-wb}, we are not aware of a frequentist monitoring procedure that adequately addresses both.
To this end, we introduce a new nonanticipating score-based CUSUM chart statistic and a computationally efficient procedure for generating dynamic control limits.
We note that Bayesian approaches naturally handle nonstationarity and sources of uncertainty, but posterior inference is computationally challenging over long periods of time and for complex models of performance decay \citep{Shiryaev1963-xq, West1986-jt, Bhattacharya1994-cp}.
As such, we leave the study of Bayesian monitoring methods to future work.

Through simulation studies, we explore how various factors such as clinician trust and model retraining impact our ability to detect performance decay.
We also apply score-based CUSUM monitoring to detect calibration decay of a ML-based risk calculator for postoperative nausea and vomiting (PONV) on data from the Multicenter Perioperative Outcomes Group (MPOG).
For a locked PONV risk calculator, the procedure fires an alarm during the COVID-19 pandemic.
In contrast, when we continually retrain the model, we find that the model can steadily adapt to temporal shifts while remaining well-calibrated.
Code is available at \url{https://github.com/jjfeng/monitoring_ML_CMI}.

\section{Two monitoring problems}
Here we introduce the sequential monitoring problem in the ``standard'' setting and then extend this to the setting with CMI.
In the following section, we discuss conditions under which a monitoring procedure designed for the former setting provides valid inference for the latter.
For convenience, Table~\ref{tab:symbols}  summarizes the mathematical notation used in this paper.

\subsection{The standard monitoring problem}
Following the framework set forth in \citet{Chu1996-fg}, suppose one observes covariates $Z_t$ and predictors $Y_t$ for times $t = 1,\cdots, m$, during which the conditional distribution $Y_t | Z_t$ is constant.
For some integer $K > 1$, we are interested in detecting structural change in this conditional distribution for sequentially arriving observations $(Z_t, Y_t)$ for times $t = m+1,\cdots, mK$.
We assume there is at most one changepoint $\kappa$ during this monitoring period, where $\kappa = \lfloor m \kappa^{\rel}\rfloor $ for some $\kappa^{\rel} \in (1,K)$.
If such a changepoint exists, the conditional distribution $Y_t|Z_t$ is assumed to follow some model with parameters $(\theta, \delta \mathbbm{1}\{t \ge \kappa\})$, where $\theta \in \mathbb{R}^p$ describes the pre-change distribution and $\delta  \in \mathbb{R}^d$ describes the shift.
The outcome $Y_t$ is assumed to be conditionally independent of historical data given $Z_t$, so that the distribution can be factorized into
\begin{align}
	\hspace{-0.2in}
	\begin{split}
		\Pr\left(Y_1,\cdots, Y_{t}, Z_1,\cdots, Z_{t} \right)=
		& \prod_{i=1}^{t} \Pr\left(Y_i \mid Z_i ; \theta, \delta\mathbbm{1}\{i \ge \kappa \} \right) \Pr\left(Z_i \mid Z_1, Y_1, \cdots, Z_{i-1}, Y_{i-1}; \eta \right)
		\label{eq:monitoring_base}
	\end{split}
\end{align}
for $t = 1,\cdots, mK$, where $\eta$ describes the conditional distribution of $Z_i$ with respect to past monitoring data.
The hypothesis test of interest is
\begin{align}
	\begin{split}
		H_0 &: Y_i | Z_{i} \sim (\theta, 0) \quad \forall i=1,\cdots, mK\\
		H_1&: \exists \delta, \kappa \text{ s.t. } Y_i| Z_i \sim (\theta, \delta \mathbbm{1}\{i \ge \kappa \rfloor \}) \quad \forall i=1,\cdots, mK,
		\label{eq:main_hypo}
	\end{split}
\end{align}
where $\theta$ is a nuisance parameter.

We define a monitoring procedure as one with chart statistic $C_{m}(t)$ and dynamic control limit (DCL) $h_{m}(t)$ at times $t = m+1, \cdots, mK$.
The procedure fires an alarm when the chart statistic first exceeds the control limit, i.e.
$
\hat{T}_m = \inf\left\{
t: C_{m}(t) > h_{m}(t)
\right\}.$
Assuming the operating characteristics for a monitoring procedure have been established for the standard setting, our interest is to understand when these properties transfer to the CMI setting.

\subsection{The CMI monitoring problem}
Data are generated in the CMI setting as follows.
Upon arrival of a patient with covariates $(X_t, \tilde{X}_t)$ at time $t$, the ML algorithm $\hat{f}_t:\mathcal{X} \mapsto \mathcal{Q}$ outputs prediction $\hat{f}_t(X_t)$.
The clinician then takes into account various factors---such as the prediction, patient covariates, and prior experience with the algorithm---to decide treatment $A_t$, where $A_t = 0$ indicates SOC and $A_t = 1$ indicates additional intervention.
Finally patient outcome $Y_t$ is observed.
Following the potential outcomes framework, let $Y_{t}(a)$ indicate the patient outcome if treatment $a$ is administered.
We assume the realized outcome $Y_t$ is equal to the potential outcome $Y_t(A_t)$ (consistency).
Note that $t$ denotes patient ordering, not repeated observations per patient.

This paper will focus on monitoring ML algorithms that predict the outcome under SOC $Y_t(0)$, though our results can be extended to monitor algorithms that predict treatment-specific outcomes.
The central question is whether we can simply monitor the subsequence of patients who were assigned SOC, as their outcomes remain unaltered.
To this end, let $\tau_i$ be the time of the $i$th patient who received SOC, which can be viewed as a random stopping time.
We refer to the data observed at time points $\{\tau_i: i = 1,\cdots, mK\}$ as SOC-only data.

In simple cases, we aim to detect structural change in $Y_{t}(0) \mid \hat{f}_{t}(X_{t})$.
This corresponds to monitoring the calibration curve if $\hat{f}_{t}$ is risk prediction algorithm and positive/negative predictive values if $\hat{f}_t$ is a binary classifier.
For static algorithms, a structural change can only occur if there is a shift in the joint distribution $(X_t, Y_t(0))$.
For evolving ML algorithms, a structural change can occur either due to a shift in $(X_t, Y_t(0))$ and/or the updating procedure fails to maintain this conditional relationship.
Both failure modes are reasons for alarm.

When we directly apply a standard monitoring procedure to SOC-only data, this corresponds to replacing $Z_i$ with $\hat{f}_{\tau_i}(X_{\tau_i})$ and $Y_i $ with $Y_{\tau_i}$ to test the hypothesis
\begin{align}
	\begin{split}
		H_0 &: Y_{t}(0) | \hat{f}_{t}(X_{t}), t = \tau_i \sim (\theta, 0) \quad \forall i=1,\cdots, mK\\
		H_1&: \exists \delta, \kappa \text{ s.t. } Y_{t}(0) | \hat{f}_{t}(X_{t}), t = \tau_i \sim (\theta, \delta \mathbbm{1}\{i \ge \kappa \}) \quad \forall i=1,\cdots, mK.
	\end{split}
	\label{eq:cmi_hypothesis}
\end{align}
Note that this assumes the changepoint corresponds to the random stopping time $\tau_{ \lfloor m\kappa^{\rel} \rfloor }$.
In more complex settings where clinical decision-making also depends on $\tilde{X}_t$, it is useful to monitor for shifts in $Y_t(0)|\hat{f}_t(X_t), \tilde{X}_t$.
This corresponds to replacing $Z_i$ with $\left( \hat{f}_{\tau_i}(X_{\tau_i}), \tilde{X}_{\tau_i}\right)$ and $Y_i$ with $Y_{\tau_i}$ in the standard monitoring procedure to test
\begin{align}
	\begin{split}
		H_0&: Y_{t}(0) | \hat{f}_{t}(X_{t}), \tilde{X}_{t}, t = \tau_i \sim (\theta, 0) \quad \forall i=1,\cdots, mK\\
		H_1&: \exists \delta, \kappa \text{ s.t. } Y_{t}(0) | \hat{f}_{t}(X_{t}),\tilde{X}_{t},  t=\tau_i \sim (\theta, \delta \mathbbm{1}\{i \ge \kappa \}) \quad \forall i=1,\cdots, mK
	\end{split}.
	\label{eq:cmi_hypothesis_extend}
\end{align}
One can view \eqref{eq:cmi_hypothesis_extend}  as monitoring stratified calibration curves---a stricter notion of calibration in the hierarchy defined by \citet{Van_Calster2016-ey}---or stratified predictive values.

In general, the operating characteristics of a monitoring procedure designed for the standard setting may not transfer to the CMI setting because (i) the distribution of the SOC-only data may not factorize per \eqref{eq:monitoring_base} and (ii) the parameters may be biased.
Nevertheless, the following section highlights ignorability conditions under which valid statistical inference \textit{does} transfer.
We note that no positivity assumptions are required to establish control of the false alarm rate, because we only monitor for structural change with respect to the population with non-zero probability of receiving SOC, even though the null hypothesis applies to the broader population.
Nevertheless, statistical power and consistency of the monitoring procedure does depend on positivity, as we show in our numerical experiments.

\section{Ignorability assumptions}
\label{sec:ignore}

We now describe two conditions in which standard monitoring procedures applied directly to SOC-only data, ignoring issues of CMI, provide valid inference.
The first ignorability condition is the well-known conditional exchangeability assumption, also known as no unmeasured confounding.
The second ignorability condition allows treatment decisions to depend on unmeasured confounders but requires selection bias to remain constant over time.

\subsection{Conditional exchangeability}

The simplest version of the conditional exchangeability assumption states that treatment assignment $A_t$ is conditionally independent of potential outcome $Y_t(0)$ given prediction $\hat{f}_t(X_t)$, i.e.
\begin{align}
	Y_t(0) &\perp A_t \mid \hat{f}_t(X_t)
	\quad \forall t = 1,2,\cdots.
	\label{eq:exchangeable}
\end{align}
In addition, let us assume the potential outcome $Y_t(0)$ is conditionally independent of data at prior time points per
\begin{align}
	\hspace{-0.3in}
	Y_{t}(0)
	&\perp \left(A_1, \hat{f}_1(X_1), Y_1(0), \cdots, A_{t-1}, \hat{f}_{t-1}(X_{t-1}), Y_{t-1}(0)\right)
	\mid\hat{f_{t}}(X_t), A_t = 0 \quad \forall t = 1,2,\cdots.
	\label{eq:indpt}
\end{align}
These conditions hold, for instance, if the ML algorithm is static (i.e. $\hat{f}_t(X_t) \equiv \hat{f}(X_t)$ for all $t$) and the treatment propensity at time $t$ solely depends on the ML algorithm's predictions.
Figure~\ref{fig:dags} top shows a more complex example, in which both the treatment propensities and ML algorithm evolve.
To ensure that the conditional independence assumption \eqref{eq:indpt} holds, there cannot be an arrow from $Y_{t-1}(a_{t-1})$ to $\hat{f}_{t}(X_t)$, because this would introduce collider bias.
As such, we must keep the data for model updating and monitoring separate, through mechanisms such as online sample-splitting.

\begin{figure}
	\centering
	\includegraphics[width=0.5\linewidth]{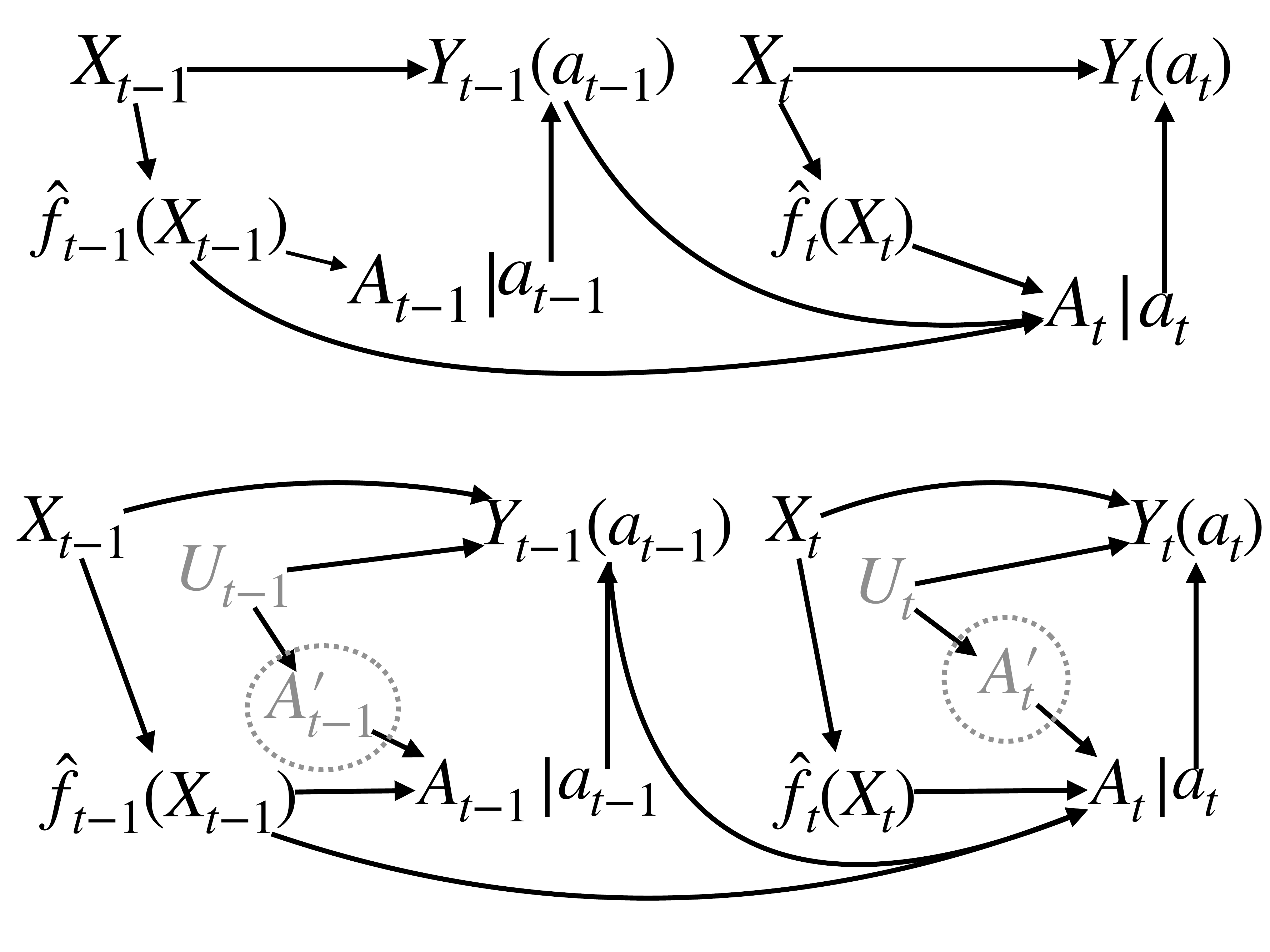}
	\caption{
		The top single world intervention graph (SWIG) \citep{Richardson2013-ol} is an example satisfying conditional exchangeability \eqref{eq:exchangeable} and conditional independence \eqref{eq:indpt}.
		Black and gray variables are observed and unobserved, respectively.
		The bottom SWIG satisfies time-constant selection bias \eqref{eq:equiconf} and conditional independence \eqref{eq:indpt} per the assumptions listed in Example~\ref{example:equiconf}.
		The dotted ovals indicate determinative causes where $A_t = 1$ whenever $A_{t}' = 1$ for all $t$.
	}
	\label{fig:dags}
\end{figure}

Given \eqref{eq:indpt}, the distribution of SOC-only data can be factorized into
\begin{align}
	\begin{split}
		&\Pr\left(Y_{\tau_1}, \cdots, Y_{\tau_{t}}, \hat{f}_{\tau_1}(X_{\tau_1}), \cdots, \hat{f}_{\tau_{t}}(X_{\tau_{t}})\right)\\
		= & \prod_{i=1}^{t} \Pr\left(Y_{\tau_i} | \hat{f}_{\tau_i}(X_{\tau_i}) \right)
		\Pr\left(
		\hat{f}_{\tau_i}(X_{\tau_i}) \mid \hat{f}_{\tau_1}(X_{\tau_1}), \cdots, \hat{f}_{\tau_{i-1}}(X_{\tau_{i-1}}),
		Y_{\tau_1}, \cdots, Y_{\tau_{i-1}}
		; \eta
		\right).
		\label{eq:cmi_indpendence}
	\end{split}
\end{align}
Then per the conditional exchangeability assumption \eqref{eq:exchangeable}, we have that the conditional distribution in the SOC-only data coincides with that of the general population, i.e.
\begin{align}
	\Pr(Y_{\tau_i} | \hat{f}_{\tau_i}(X_{\tau_i}))
	= & \Pr(Y_{t}(0) | \hat{f}_{t}(X_{t}), t=\tau_i)
	,
	\label{eq:cmi_factors}
\end{align}
which is parameterized by $(\theta, \delta\mathbbm{1}\{i \ge \kappa\})$.
So the distribution of SOC-only data factorizes into the same form as \eqref{eq:monitoring_base} with monitoring target $Y_{t}(0) | \hat{f}_{t}(X_{t}), t=\tau_i$ and a standard monitoring procedure that ignores CMI would provide valid inference for \eqref{eq:cmi_hypothesis}.

As is typically the case, the propensity to treat may also depend on other predictors $\tilde{X}_t$.
We can extend \eqref{eq:exchangeable} and \eqref{eq:indpt} by conditioning on $\tilde{X}_t$ as well.
Using similar arguments as above, we can show that ignoring CMI still provides valid inference for testing \eqref{eq:cmi_hypothesis_extend}.

\subsection{Time-constant selection bias}
\label{sec:exch}

When unmeasured confounders $U_t$ exist, conditional exchangeability no longer holds and conditioning on $A_t =0 $ results in selection bias (Figure~\ref{fig:dags} bottom).
That is, the observed relationship between the outcome and the ML predictions in the SOC-only data does not necessarily correspond to that of the general population.
Nevertheless, we can show that the treatment propensities are still ignorable if selection bias remains constant over time.

The simplest version of the time-constant selection bias assumption states that there is some function $h: \mathcal{Q}\mapsto\mathbb{R}$ such that
\begin{align}
	\hspace{-0.2in}
	E\left[Y_{t}(0)\mid \hat f_{t}(X_t) = q\right]-E\left[Y_{t}(0)\mid \hat f_{t}(X_t)= q,A_t=0\right]=h\left(q\right) \qquad \forall q \in \mathcal{Q}, \forall t= 1,2,\cdots.
	\label{eq:equiconf}
\end{align}
Nevertheless, we can still recover shifts in the conditional risk in the general population from the SOC-only data because this bias cancels out, i.e.
\begin{align}
	\begin{split}
		& E\left[Y_{t}(0)\mid \hat f_{t}(X_t)=q,A_t=0\right]-E\left[Y_{1}(0)\mid \hat f_{1}(X_{1})=q,A_{1}=0\right]\\
		=&
		E\left[Y_{t}(0)\mid\hat{f}_{t}(X_{t})=q\right]-E\left[Y_{1}(0)\mid\hat{f}_{1}(X_{1})=q\right]
		\qquad \forall q \in \mathcal{Q}, \forall t= 1,2,\cdots.
		\label{eq:equiconf_selection}
	\end{split}
\end{align}
So combining time-constant selection bias with the conditional independence assumption \eqref{eq:indpt}, we can plug into the factorization \eqref{eq:cmi_indpendence}
\begin{align}
	\Pr(Y_{\tau_i} | \hat{f}_{\tau_i}(X_{\tau_i}) = q) =
	\Pr(Y_{t}(0)| \hat{f}_{t}(X_{t}) = q, A_{t} = 0, t = \tau_i; \theta', \delta\mathbbm{1}\{i \ge \kappa\})
\end{align}
where $\theta'$ describes the conditional distribution  $Y_t(0)|\hat{f}_t(X_t), A = 0$ prior to the changepoint.
Importantly, the value of $\delta$ remains unbiased, even though $\theta'$ may not equal $\theta$.
Since $\theta'$ is treated as a nuisance parameter, a standard monitoring procedure that ignores CMI can still provide valid inference for testing \eqref{eq:cmi_hypothesis}.
A straightforward extension is to condition on additional variables $\tilde{X}_t$ in assumptions \eqref{eq:indpt} and \eqref{eq:equiconf}, which implies that a standard monitoring procedure that ignores CMI provides valid inference for testing \eqref{eq:cmi_hypothesis_extend}.

When does time-constant selection bias hold?
We discuss one set of conditions in Example~\ref{example:equiconf} of the Appendix for the DAG (Figure~\ref{fig:dags} bottom).
Briefly, suppose the clinician makes an initial treatment decision $A_{t}' \in \{0,1\}$ that she modifies based on some unmeasured confounder $U_t$ (e.g. a biomarker)
One of the conditions is that $A_t = 1$ whenever $A_{t}' = 1$, also known as determinative causation \citep{Hernan2004-ll, VanderWeele2007-ib, VanderWeele2009-pa}.

\section{The score-based CUSUM}
\label{sec:score_monitor}
In this section, we describe a new score-based CUSUM procedure with DCLs to detect performance deterioration in the CMI setting.
An example control chart for this procedure is shown in Figure~\ref{fig:example_charts}.
The procedure is specially designed to address two key challenges that arise when monitoring ML-based risk prediction algorithms.
First, the predictor sequence may be nonstationary due to changes in the model and/or the clinician's interactions with the model over time.
Second, the initial performance of the algorithm may be unknown and must be estimated.
DCLs have been previously used to address nonstationarity, but only for likelihood-based test statistics and in settings where the pre-change and shift parameters are known \citep{Zhang2017-iw}, as the distribution of such test statistics can be calculated under such assumptions.
Instead, we propose to use a \textit{nonanticipative} score-based chart statistic, in that the score for the $i$-th observation is calculated with respect to parameters estimated using only historical data \citep{Lorden2005-bw}.
This preserves the martingale structure of the chart statistic even when the nuisance parameter $\theta$ for the pre-change distribution is continually re-estimated.
We can then derive its asymptotic distribution and  efficiently construct DCLs using a simple parametric bootstrap procedure.

\begin{figure}
	\centering
	\includegraphics[width=0.4\linewidth]{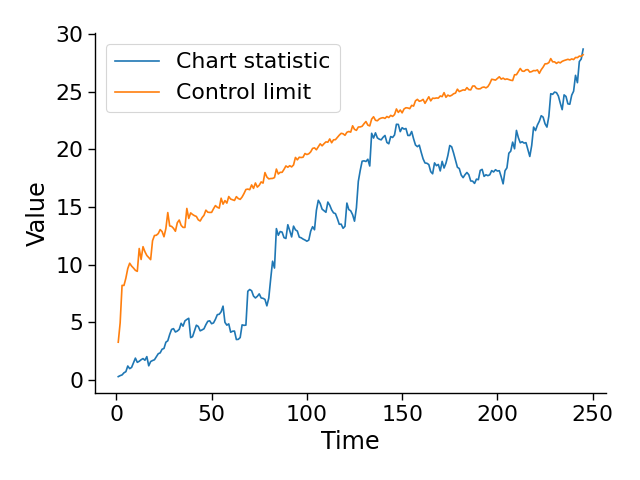}
	\caption{
		Example control chart for the score-based CUSUM.
		The chart statistic and control limits are shown in the blue and orange lines, respectively.
		An alarm is fired when the chart statistic exceeds the control limit.
	}
	\label{fig:example_charts}
\end{figure}

For ease of notation, we will present the monitoring procedures in the context of the standard setting.
The procedure can be applied directly to the CMI setting if the aforementioned ignorability conditions hold.

We quantify the operating characteristics of our procedure in terms of the false alarm rate and its statistical power.
That is, the probability of firing an alarm before the changepoint, $\Pr\left(
\hat{T}_m < \kappa
\right)$, should be controlled at level $\alpha > 0$.
In addition, the procedure should be consistent, in that
$\lim_{m \rightarrow\infty }
\Pr\left(
\hat{T}_m \le m K
\right) = 1$ under $H_1$.
All proofs are in the Appendix.

For the theoretical analyses, we suppose the parametric model for $Y_t|Z_t$ is correctly specified.
For concreteness, consider the following examples, which we will use in our empirical analyses.
The first model describes both the pre-change distribution and the structural change on the log odds scale using logistic regression, i.e.
\begin{align}
	\Pr\left(Y_t = 1 \mid Z_t ; \theta, \delta\mathbbm{1}\{t \ge \kappa\} \right) &=
	\frac{1}{1 + \exp\left (-(\theta + \delta\mathbbm{1}\{t \ge \kappa\})^\top Z_t \right)}.
	\label{eq:mdl_de}
\end{align}
The second model describes the pre-change distribution on the log odds scale but the structural change on the risk scale using
\begin{align}
	\Pr\left(Y_t = 1 \mid Z_t  ; \theta, \delta\mathbbm{1}\{t \ge \kappa\} \right) &=
	\left[
	\frac{1}{1 + \exp\left (-\theta^\top Z_t \right)} + (\delta \mathbbm{1}\{t \ge \kappa\})^\top Z_t
	\right]_{[0,1]},
	\label{eq:mdl_tc}
\end{align}
where $[x]_{[0,1]} = \min(1, \max(0, x))$.
When conditional exchangeability holds, we can monitor for structural change on any scale and use either \eqref{eq:mdl_de} or \eqref{eq:mdl_tc}.
If only time-constant selection bias holds, we are restricted to modeling shifts on the risk scale and may only use \eqref{eq:mdl_tc}.

\subsection{Known pre-change parameter}
\label{sec:prechange_known}
We begin with the simplest setting in which the true value of the nuisance parameter $\theta$, denoted $\theta_0$, is known.
This holds, for instance, when the model is known to be well-calibrated and one aims to test \eqref{eq:cmi_hypothesis}.
Note that the noncontamination dataset not needed in this setting.

For observation $(Z_t, Y_t)$, the score vector with respect to $\delta$ under the null hypothesis is $\left . \nabla_\delta \log p\left(Y_t \mid Z_t; \theta_0, \delta \right) \right|_{\delta = 0}$.
Because the conditional mean of the score is zero prior to the changepoint and nonzero after, we will monitor for shifts in the average score using the cumulative sum (CUSUM) \citep{Page1954-gr}.
In particular, for candidate changepoint $t'$, define the cumulative score up to time $t$ as
\begin{align}
	\psi^{(\known)}_m(t', t) = \sum_{i=t'}^t
	\left . \nabla_\delta \log p\left(Y_i \mid Z_i; \theta_0, \delta  \right) \right|_{\delta = 0}.
	\label{eq:psi_oracle}
\end{align}
Since the true changepoint time is unknown, the score-based CUSUM with respect to norm $\|\cdot\|$ is defined as
\begin{align}
	C_{m}^{(\known)}(t) =
	\max_{t' = m + 1,\cdots, t}
	\left \|
	\psi^{(\known)}_m(t', t)
	\right \|.
\end{align}
In our empirical analyses, we use $\|\cdot\|_1$, though one can consider other norms (for instance, using $\|\cdot\|_2$ would be more similar to Rao's score statistic).

We define DCLs $h_{m}(t)$ recursively using an alpha spending approach.
Let $\alpha^{\rel}: [1,K] \mapsto [0,1]$ be the alpha-spending function, where $\alpha^{\rel}$ is assumed to be continuous and monotonically non-decreasing.
Then $h_{m}(t)$ is the minimal threshold at which the conditional false alarm rate up to time $t$ matches the prespecified alpha-spending rate, i.e.
\begin{align}
	\hspace{-0.4in}
	\Pr\left(
	\exists t' \in \{m + 1,2,...,t\} \text{ such that }
	C_{m}^{(\known)}(t') > h_{m}(t')
	\mid \{Z_{t'}: t' = 1,...,t\}
	\right) \le \alpha^{\rel}(t/m)
	\label{eq:control_oracle}
\end{align}
under the null.
To calculate the DCLs, we resample outcomes $Y^{*}_t$ given $Z_t$ and the known pre-change parameters for all time points $t$, constructing sequences $\{ (Z_t, Y^{*(b)}_t) : t = m+1,\cdots, mK \}$ for $b = 1,\cdots, B$.
By choosing a sufficiently large value for $B$, we then estimate the distribution of the chart statistic and compute the DCLs.

We can show that the monitoring procedure is consistent if the average score after the changepoint is bounded away from zero under the alternative, i.e. there is some $c > 0$ and $K'\in(\kappa^{\rel}, K]$ such that
\begin{align}
	\lim_{m \rightarrow \infty}
	\left\|
	\frac{1}{m}
	\sum_{t=\lfloor m \kappa^{\rel} \rfloor }^{\lfloor mK'\rfloor }
	\mathbb{E} \left[
	\left . \nabla_{\delta} \log p \left(Y_t | Z_t; \theta_0, \delta \right) \right|_{\delta = 0}
	\right] \right\|
	\ge c,
	\label{eq:alt_bounded_away}
\end{align}
and the martingale
\begin{align}
	\sum_{t=\lfloor m \kappa^{\rel} \rfloor }^{\lfloor mK'\rfloor}
	\left . \nabla_{\delta} \log p \left(Y_t | Z_t; \theta_0, \delta \right) \right|_{\delta = 0}
	- \mathbb{E} \left[
	\left . \nabla_{\delta} \log p \left(Y_t^* | Z_t; \theta_0, \delta \right) \right|_{\delta = 0}
	\mid  Z_t
	\right]
\end{align}
satisfies the martingale central limit theorem.

\subsection{Unknown pre-change parameter}
\label{sec:score_unknown}
When the pre-change parameter $\theta$ is unknown, we need to estimate its value and adjust the DCLs to reflect this additional source of uncertainty.
This situation is likely to occur when calibration data is limited (e.g. deploying a model at a new site) or when monitoring the more complex conditional distribution $Y_t(0)| \hat{f}_t(X_t), \tilde{X}_t$.
To this end, let the maximum likelihood estimator (MLE) for $\theta$ up to time $t$, denoted $\hat{\theta}_{m,t}$, be the solution to the estimating equation
$\sum_{i=1}^{t}\nabla_{\theta}\log p\left(Y_{i}\mid Z_{i};\theta, 0\right) = 0$.
We define the score-based CUSUM chart statistic
\begin{align}
	C_{m}^{(\plugin)}(t) =
	\max_{t' = m + 1,\cdots, t}
	\left \|
	\psi^{(\plugin)}_m(t', t)
	\right \|,
	\label{eq:plugin_def}
\end{align}
where
\begin{align}
	\psi^{(\plugin)}_m(t', t)
	= \sum_{i=t'}^t
	\left . \nabla_\delta \log p\left(Y_i \mid Z_i; \hat{\theta}_{m,i-1}, \delta  \right) \right|_{\delta = 0}.
	\label{eq:plugin_sub}
\end{align}
The score for observation $(Z_i, Y_i)$ uses $\hat{\theta}_{m,i-1}$ rather  than $\hat{\theta}_{m,i}$, so \eqref{eq:plugin_sub} is nonanticipative.

To determine the operating characteristics of $C_m^{(\plugin)}$ under the null hypothesis, we require the following assumptions.
We use $\delta_0 = 0$ to denote the value of $\delta$ under the null.
\begin{assumption}
	Under the null, there is a zero-mean $(p+d)$-dimensional non-degenerate
	gaussian process $U$ such that
	\begin{equation*}
		\max_{m+1\le i\le mK}\left\Vert
		\left[
		\frac{1}{\sqrt{m}}\sum_{j=1}^{i}\left(\begin{array}{c}
			\nabla_{\theta}\log p\left(Y_{j}\mid Z_{j};\theta_{0},\delta_0\right)\\
			\nabla_{\delta}\log p\left(Y_{j}\mid Z_{j};\theta_{0},\delta_0\right)
		\end{array}\right)
		\right]
		-\left(\begin{array}{c}
			U_{\theta}(i/m)\\
			U_{\delta}(i/m)
		\end{array}\right)\right\Vert =o_{p}\left(1\right).\label{eq:gauss_proc}
	\end{equation*}
	where $U_{\theta}$ and $U_\delta$ are $p$- and $d$-dimensional, respectively.
	\label{assume:gauss}
\end{assumption}
\begin{assumption}
	Under the null, $\hat{\theta}_{m,i}$ is asymptotically linear with a remainder
	term that converges uniformly to zero, i.e.
	\begin{align*}
		\max_{m<i\le mK}\sqrt{m}\left\Vert \left(\hat{\theta}_{m,i}-\theta_{0}\right)-
		\mathbb{E}\left[-\sum_{j=1}^{i}
		\nabla_{\theta}^{2}\log p\left(Y_{j}\mid Z_{j};\theta_{0},\delta_0\right)
		\right]^{-1}
		\sum_{j=1}^{i}
		\nabla_{\theta}\log p\left(Y_{j}\mid Z_{j};\theta_{0},\delta_0\right)
		\right\Vert  & =o_{p}\left(1\right).
		\label{eq:theta_remain}
	\end{align*}
	\label{assume:remain}
\end{assumption}
\begin{assumption}
	Under the null, there exist functions $\Lambda_0: [1, K] \mapsto \mathbb{R}^{p\times p}$ and $\bar{V}_0: [1, K] \mapsto \mathbb{R}^{d\times p}$ such that
	\begin{align*}
		\max_{m<i\le mK}
		\left \|\Lambda_{0}^{-1}\left(\frac{i}{m}\right) -  m \mathbb{E}\left[-\sum_{j=1}^{i}
		\nabla_{\theta}^{2}\log p\left(Y_{j}\mid Z_{j};\theta_{0},\delta_0\right)
		\right]^{-1}
		\right \| = o_p(1)
		\\
		\bar{V}_0(t) = \mathbb{E}\left[
		\nabla_{\theta}\nabla_{\delta}\log p\left(Y_{\lfloor mt \rfloor }\mid Z_{\lfloor mt \rfloor };\theta_0,\delta_0\right)
		\right]
		\quad \forall t \in [1,K].
	\end{align*}
	\label{assume:matrices}
\end{assumption}
\noindent These assumptions hold, for instance, under piecewise local stationarity \citep{Wu2018-jk, Horvath2021-db}.
We can then prove that $\psi_m^{(\plugin)}$ is well-approximated by the process
\begin{align}
	\hspace{-0.6in}
	\phi_{m}(t_{1},t_{2})
	=\sum_{i=t_{1}}^{t_{2}}
	\nabla_{\delta}\log p\left(Y_{i}\mid Z_{i};\theta_{0},\delta_0\right)
	+\sum_{i=t_{1}}^{t_{2}}
	V_{0}\left(\frac{i}{m}\right ) \Lambda_{0}^{-1}\left(\frac{i-1}{m} \right)
	\sum_{j=1}^{i-1}
	\nabla_{\theta}\log p\left(Y_{j}\mid Z_{j};\theta_{0},\delta_0\right)
\end{align}
under the null and that the latter converges weakly, as formalized below.
\begin{theorem}
	Suppose the null hypothesis is true and that Assumptions~\ref{assume:gauss}, \ref{assume:remain}, and \ref{assume:matrices} hold.
	In addition, suppose the second and third derivatives are bounded (see Appendix for the formal assumption).
	Then
	\begin{align*}
		\max_{m<t_{1},t_{2}\le mK}
		\frac{1}{\sqrt{m}}
		\left\Vert \psi_{m}^{(\plugin)}(t_{1},t_{2})-\phi_{m}(t_{1},t_{2})\right\Vert =o_{p}(1).
	\end{align*}
	and
	\begin{align*}
		\hspace{-0.9in}
		\left\{ (\nu_{1},\nu_{2})\mapsto \frac{1}{\sqrt{m}} \psi_{m}^{(\plugin)}(\lfloor m\nu_{1} \rfloor, \lfloor m \nu_{2} \rfloor )\right\}_{(\nu_{1},\nu_{2}) \in \Delta}
		\Rightarrow
		\left\{ (\nu_{1},\nu_{2})\mapsto U_{\delta}(\nu_{2})-U_{\delta}(\nu_{1})+\int_{\nu_{1}}^{\nu_{2}}\bar{V}_{0}(v)\Lambda_{0}^{-1}(v)U_{\theta}(v)dv\right\}_{(\nu_{1},\nu_{2}) \in \Delta}
	\end{align*}
	where $\Delta = \left\{(\nu_1, \nu_2): \nu_1 < \nu_2, \nu_1  \in [1,K], \nu_2 \in [1,K] \right\}$.
	\label{thrm:score_conv}
\end{theorem}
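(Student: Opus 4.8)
The plan is to establish the two displays in sequence. The first is a uniform stochastic approximation of the nonanticipative plug-in chart process by the auxiliary process $\phi_m$; once this holds, the weak convergence of $m^{-1/2}\psi_m^{(\plugin)}$ reduces to that of $m^{-1/2}\phi_m$. The engine for the approximation is a Taylor expansion of the plug-in score in its $\theta$-argument together with the influence-function representation of $\hat\theta_{m,i-1}$; the engine for the weak limit is the Gaussian-process approximation of Assumption~\ref{assume:gauss} combined with a Riemann-sum argument.

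First I would Taylor expand each summand $\nabla_\delta\log p(Y_i\mid Z_i;\hat\theta_{m,i-1},\delta_0)$ about $\theta_0$. The zeroth-order term reproduces $\sum_{i=t_1}^{t_2}\nabla_\delta\log p(Y_i\mid Z_i;\theta_0,\delta_0)$, the leading term of $\phi_m$; the first-order term is $\nabla_\theta\nabla_\delta\log p(Y_i\mid Z_i;\theta_0,\delta_0)(\hat\theta_{m,i-1}-\theta_0)$; and the second-order remainder involves the third log-likelihood derivatives times $\|\hat\theta_{m,i-1}-\theta_0\|^2$. Into the first-order term I would insert the linear expansion of Assumption~\ref{assume:remain}, $\hat\theta_{m,i-1}-\theta_0\approx\mathbb{E}[-\sum_{j=1}^{i-1}\nabla_\theta^2\log p]^{-1}\sum_{j=1}^{i-1}\nabla_\theta\log p$, replace the expected inverse Hessian by $m^{-1}\Lambda_0^{-1}((i-1)/m)$ via Assumption~\ref{assume:matrices}, and replace the random cross-derivative $\nabla_\theta\nabla_\delta\log p(Y_i\mid Z_i)$ by its mean $\bar V_0(i/m)$. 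Collecting terms reproduces the drift term of $\phi_m$.

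The crux, and the step I expect to be the main obstacle, is showing that each residual is $o_p(\sqrt m)$ \emph{uniformly} over the two-parameter index $\Delta$ rather than merely pointwise. The Taylor remainder is handled by the boundedness of the third derivatives and the uniform rate $\max_{m<i\le mK}\|\hat\theta_{m,i-1}-\theta_0\|=O_p(m^{-1/2})$ from Assumptions~\ref{assume:remain}--\ref{assume:matrices}, giving $O(m)$ summands of order $O_p(m^{-1})$ and hence an $O_p(1)$ total; the linearization and Hessian-approximation errors are uniformly negligible by the $o_p(1)$ remainders in those same assumptions. The delicate piece is the replacement of $\nabla_\theta\nabla_\delta\log p(Y_i\mid Z_i)$ by $\bar V_0(i/m)$, which I would split into its fluctuation around the conditional mean $\mathbb{E}[\nabla_\theta\nabla_\delta\log p(Y_i\mid Z_i)\mid Z_i]$ and the deviation of that conditional mean from $\bar V_0(i/m)$. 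Because $Y_i$ is conditionally independent of the past given $Z_i$, the first factor is a martingale difference; multiplied by the predictable, $O_p(m^{-1/2})$ weight $m^{-1}\Lambda_0^{-1}((i-1)/m)\sum_{j<i}\nabla_\theta\log p$, its partial sums form a martingale in $t_2$ of total conditional variance $O_p(1)$, so a Doob maximal inequality yields uniform control over the upper endpoint, and writing $\phi_m(t_1,t_2)=\phi_m(m+1,t_2)-\phi_m(m+1,t_1)$ extends it to both endpoints. The second, deterministic-mean-zero deviation is absorbed by the averaging afforded by piecewise local stationarity. This inherited martingale structure, a consequence of the nonanticipative construction, is precisely what makes the uniform bound attainable.

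Finally I would pass to the weak limit through $m^{-1/2}\phi_m$. Its leading part $m^{-1/2}\sum_{i=t_1}^{t_2}\nabla_\delta\log p$ converges to $U_\delta(\nu_2)-U_\delta(\nu_1)$ as an increment of the $\delta$-block in Assumption~\ref{assume:gauss}. For the drift part I would substitute $m^{-1/2}\sum_{j=1}^{i-1}\nabla_\theta\log p\Rightarrow U_\theta(\cdot)$ and recognize the normalized outer sum $m^{-1}\sum_{i=t_1}^{t_2}\bar V_0(i/m)\Lambda_0^{-1}((i-1)/m)(\cdot)$ as a Riemann sum converging to $\int_{\nu_1}^{\nu_2}\bar V_0(v)\Lambda_0^{-1}(v)U_\theta(v)\,dv$. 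Since Assumption~\ref{assume:gauss} supplies the \emph{joint} approximation of the $\theta$- and $\delta$-partial-sum processes by the single Gaussian process $U=(U_\theta,U_\delta)$, both pieces converge jointly to continuous functionals of $U$, and the continuous-mapping theorem delivers weak convergence as processes indexed by $\Delta$. Combining this with the first display and Slutsky's theorem completes the argument.
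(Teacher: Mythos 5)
Your proposal is correct and follows essentially the same route as the paper: Taylor-expand the nonanticipative plug-in score about $\theta_0$, substitute the asymptotic linear expansion of $\hat{\theta}_{m,i-1}$ and the limits $\Lambda_0^{-1}$, $\bar{V}_0$, control the cross-derivative fluctuation term by exploiting its martingale-difference structure with the predictable $O_p(m^{-1/2})$ weights, and then obtain the weak limit of $\phi_m$ from Assumption~\ref{assume:gauss} via a Riemann-sum and continuous-mapping argument. The only notable difference is technical: you propose Doob's maximal inequality (after telescoping the two-parameter index to one endpoint) for the uniform martingale bound, whereas the paper uses a sub-Gaussian Chernoff bound with a union bound over all pairs $(t_1,t_2)$; both require the same localization on the event $\{\max_i\Vert\hat{\theta}_{m,i}-\theta_0\Vert\le c\log m/\sqrt{m}\}$ and deliver the same conclusion.
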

\noindent In addition, Theorem~\ref{thrm:alt_consist} in the Appendix proves that the procedure is consistent if analogous assumptions hold under the alternative.

Given Theorem~\ref{thrm:score_conv}, we can determine the DCLs by analyzing the distribution of $\phi_m$ among the resampled sequences $\{ (Z_t, Y^{*(b)}_t) : t = m+1,\cdots, mK \}$.
One caveat is that this technically requires sampling from the true $\theta$, but we only have an estimate for its value.
As such, we perform the parametric bootstrap and resample the $t$-th outcome using the MLE estimated up to time $t-1$.
The full monitoring procedure is outlined in Algorithm~\ref{algo:dcl}.
Additional implementation details are provided in the Appendix.

\begin{algorithm}
	\caption{Pseudocode for score-based CUSUM procedure with dynamic control limits}
	\label{algo:dcl}
	\begin{algorithmic}
		\State Select time factor $K$, alpha spending function $\alpha^{\rel}$, and number of bootstrap sequences $B$.
		\State Let $\mathcal{B}_m = \{1,...,B\}$ represent the bootstrap sequences that have not been rejected at time $m$.
		\State Observe non-contaminated data $\{(Z_t, Y_t):t = 1,\cdots, m\}$.
		\State Calculate MLE $\hat{\theta}_{m,m}$
		\For{$b = 1,...,B$}
		\State Resample outcome $Y_t^{*(b)}$ given $Z_t$ for $t = 1,\cdots, m$, with $\theta=\hat{\theta}_{m,m}$ and $\delta = 0$.
		\EndFor
		\For{$t = m+1,...,mK$}
		\State Observe $(Z_t, Y_t )$.
		\State Calculate chart statistic $C_{m}^{(plugin)}(t)$ and MLE $\hat{\theta}_{m,t}$.
		\For{$b \in \mathcal{B}_{t - 1}$}
		\State Resample outcome $Y_{t}^{*(b)}$ given $Z_t$ with $\theta = \hat{\theta}_{m,t - 1}$ and $\delta = 0$.
		\State Compute $\phi_m(t', t)$ for $t' = m+1, \cdots, t - 1$ for the $b$-th bootstrap sequence.
		\State Calculate $C^{*,(b)}_{m}(t) = \max_{t' \in \{m+1, \cdots, t\}} \phi_m(t', t)$.
		\EndFor
		\State Set $h_{m}(t)$ such that the proportion of bootstrap chart statistics exceeding the DCL is \[\left | \left\{b : b \in \mathcal{B}_{t - 1}, C^{*,(b)}_{m}(t) > h_{m}(t) \right\} \right |/B = \alpha^{\rel}\left(t/m \right) - \alpha^{\rel} \left((t-1)/m \right).\]
		\State Define $\mathcal{B}_{t}= \{b : b \in \mathcal{B}_{t - 1}, C^{*,(b)}_{m}(t) \le h_{m}(t)\}$.
		\If{$C_{m}^{(\plugin)}(t) > h_{m}(t)$}
		\State Fire an alarm. Break.
		\EndIf
		\EndFor
	\end{algorithmic}
\end{algorithm}

\section{Simulation studies}
We now investigate how the score-based CUSUM is influenced by various factors, including continual retraining of the ML algorithm, magnitude of the structural change, and clinician trust.
We vary each factor individually to isolate its impact, though many factors can co-occur in real-world settings.

Details for all empirical experiments are in the Appendix.
In all simulations, the outcome $Y_t(0)$ given predictors $X_t \in \mathbb{R}^{p'}$ and $\tilde{X}_t \in \mathbb{R}$ is generated from a logistic regression model.
Unless specified otherwise, we set $p'=8$ and train the risk prediction algorithm using logistic regression.
The nominal false alarm rate for the score-based CUSUM is set to $\alpha= 0.1$.

For comparison, we also implement Bayesian monitoring using \texttt{Stan} \citep{Carpenter2017-ze}, which performs Hamiltonian Monte Carlo (HMC).
Additional implementation details are in the Appendix.
While our findings demonstrate that Bayesian monitoring can yield similar results for appropriately chosen priors, a number of major limitations must be addressed before it can be recommended for practical use.
First, there are currently no ready-to-use software packages for long-term monitoring of the model \eqref{eq:mdl_de}.
HMC is not designed for sequential monitoring because it runs posterior inference from scratch for every observation.
In our experiments, HMC ran on the order of hours whereas the score-based CUSUM ran on the order of seconds.
Second, posterior inference is generally difficult for models with structural constraints such as \eqref{eq:mdl_tc}, so we resorted to approximate inference when using this model.
Finally, Bayesian inference can be sensitive to misspecification of the model and/or prior, so more concrete recommendations are needed on their selection.

\subsection{False alarm rate control}
\label{sec:sim_false}
We begin with evaluating false alarm rate control of score-based CUSUM monitoring in finite samples.
The data are simulated under the null and satisfy either the assumption of conditional exchangeability or time-constant selection bias.
A shift in the treatment propensities is introduced halfway through the monitoring period.

Here we consider a locked ML algorithm.
In the \texttt{Conditional Exchangeability} simulation, the treatment propensities are generated according to a logistic regression model with only $\hat{f}(X_t)$ and $\tilde{X}_t$ as inputs; the outcome is generated according to \eqref{eq:mdl_de}.
In the \texttt{Time-constant selection bias} simulation, the treatment propensities and outcome are generated per Example~\ref{example:equiconf} of the Appendix.
We consider two versions of both assumptions: one that only conditions on $\hat{f}(X_t)$ and another that conditions on $(\hat{f}(X_t), \tilde{X}_t)$.
The former leads us to test \eqref{eq:cmi_hypothesis} and the latter leads us to test \eqref{eq:cmi_hypothesis_extend}.

\begin{figure}
	\centering
	\textit{Conditional exchangeability}\\
	\includegraphics[width=0.3\linewidth]{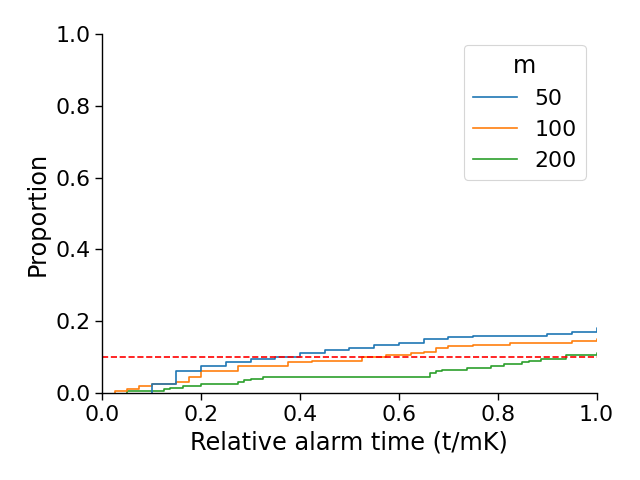}
	\includegraphics[width=0.3\linewidth]{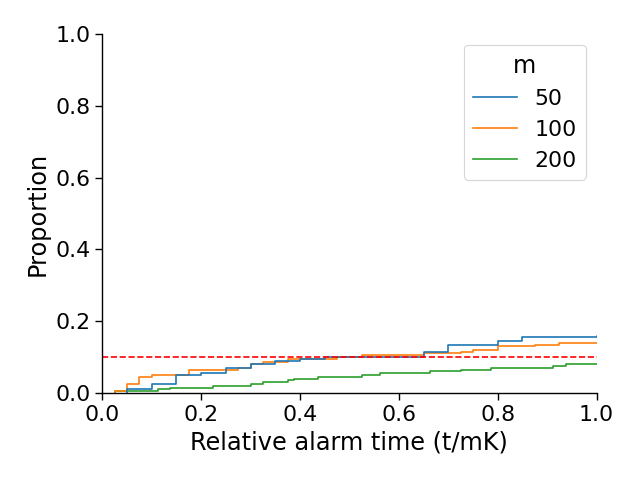}
	
	\textit{Time-constant selection bias}\\
	\includegraphics[width=0.3\linewidth]{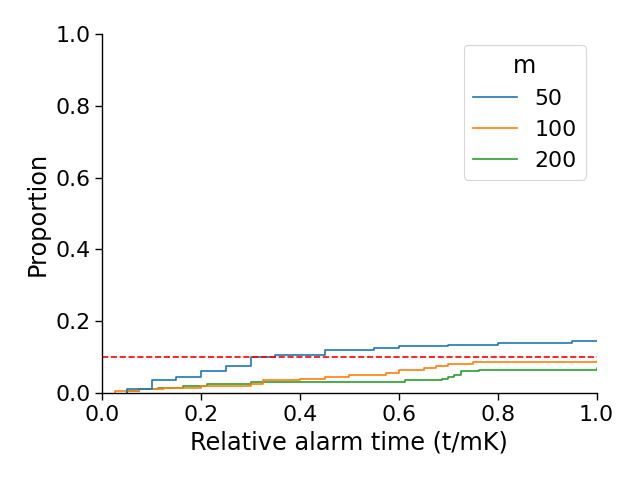}
	\includegraphics[width=0.3\linewidth]{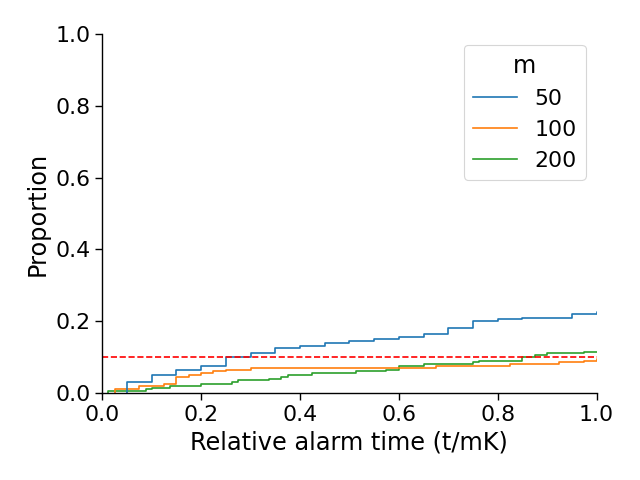}
	\caption{
		Cumulative distribution of alarm times for score-based CUSUM monitoring of a locked model $\hat{f}$ under the null.
		The assumption of conditional exchangeability and time-constant selection bias are satisfied in the top and bottom rows, respectively.
		In the left column, the assumptions hold when conditioning on $\hat{f}(X_t)$; in the right column, the assumptions hold when conditioning on $(\hat{f}(X_t), \tilde{X}_t)$.
		The target false alarm rate is 0.1, which is achieved as the size of the non-contaminated dataset $m$ increases.
	}
	\label{fig:type_i}
\end{figure}

We vary the size of the noncontamination dataset size $m$ and monitor for $mK$ time points, with $K$ set to $4$.
As shown in Figure~\ref{fig:type_i}, Type I error is inflated for small values of $m$, but converges to the nominal rate once $m$ is sufficiently large.

\subsection{Monitoring a continually retrained model in stationary settings}
\label{sec:sim_retrain}
An important safety requirement for continually retrained models is that they remain well-calibrated over time.
By wrapping such models within a monitoring procedure, we can continually check that this requirement is met while allowing model discrimination to change time.
This approach is particularly beneficial for online learning procedures that only provide weak (or no) performance guarantees, such as black-box models.
Here we investigate the alarm rates for continually retrained ridge-penalized logistic regression models and gradient boosted trees (GBT) in the stationary setting where data is independently and identically distributed.
This simulation also simultaneously investigates the sensitivity of our monitoring procedure to misspecification of the conditional distribution, since the monitoring model is unlikely to hold uniformly over time.

We simulate a higher-dimensional setting with $p'=50$ and a propensity model that satisfies conditional exchageability.
Both risk prediction algorithms were continually retrained on all prior SOC data for a fixed set of hyperparameters.
Because GBT can be poorly calibrated, we recalibrated model updates using Platt scaling.
The monitoring procedures were implemented to detect shifts on the logit and risk scales using \eqref{eq:mdl_de} and \eqref{eq:mdl_tc}, respectively.

\begin{figure}
	\centering
	\textit{Ridge-penalized logistic regression}\\
	\includegraphics[width=0.35\textwidth]{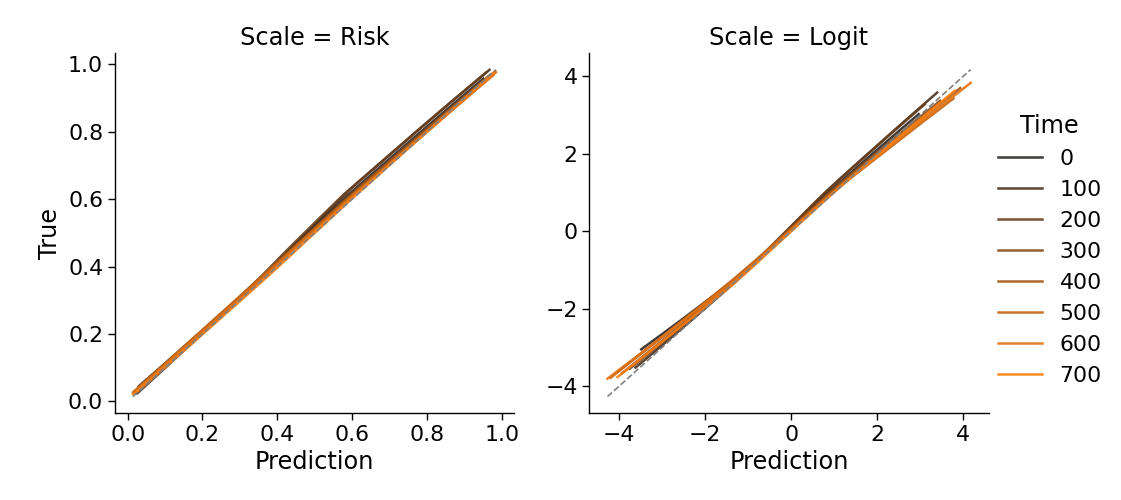}
	\includegraphics[width=0.2\textwidth]{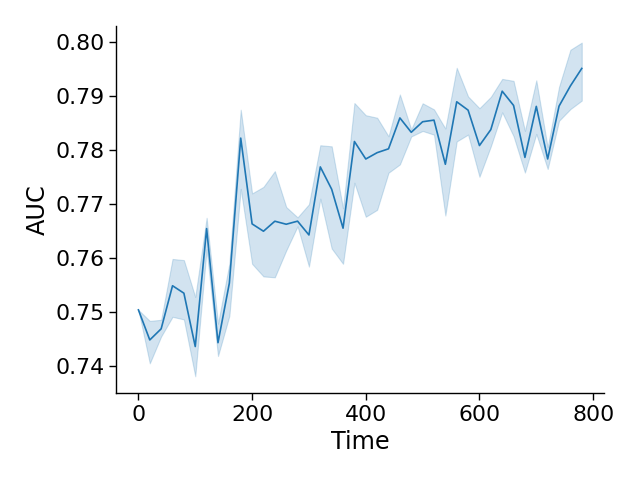}
	\includegraphics[width=0.2\textwidth]{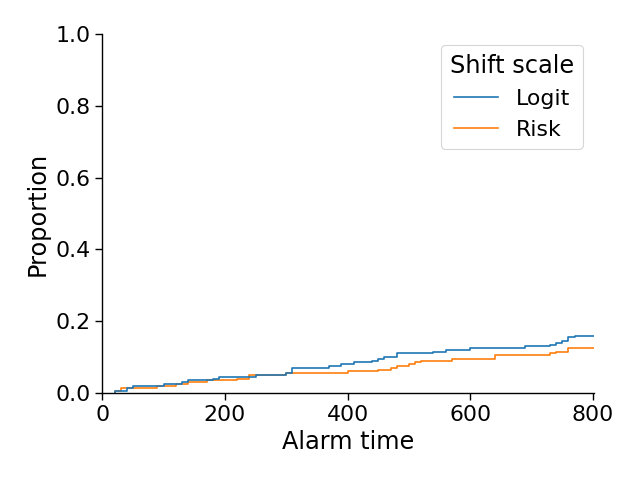}
	\includegraphics[width=0.2\textwidth]{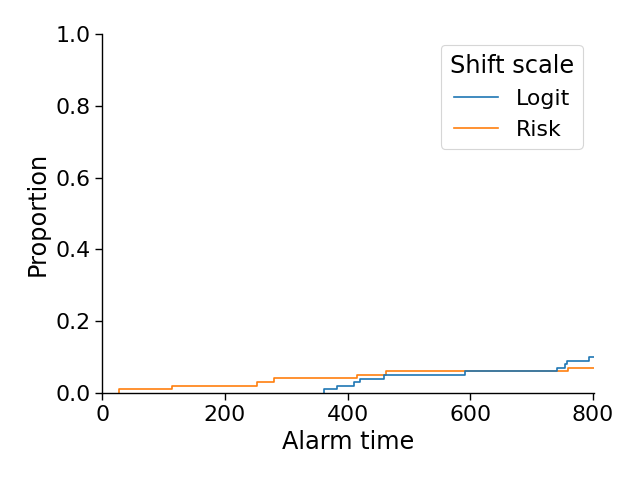}
	
	\textit{Gradient boosted trees with Platt scaling}\\
	\includegraphics[width=0.35\textwidth]{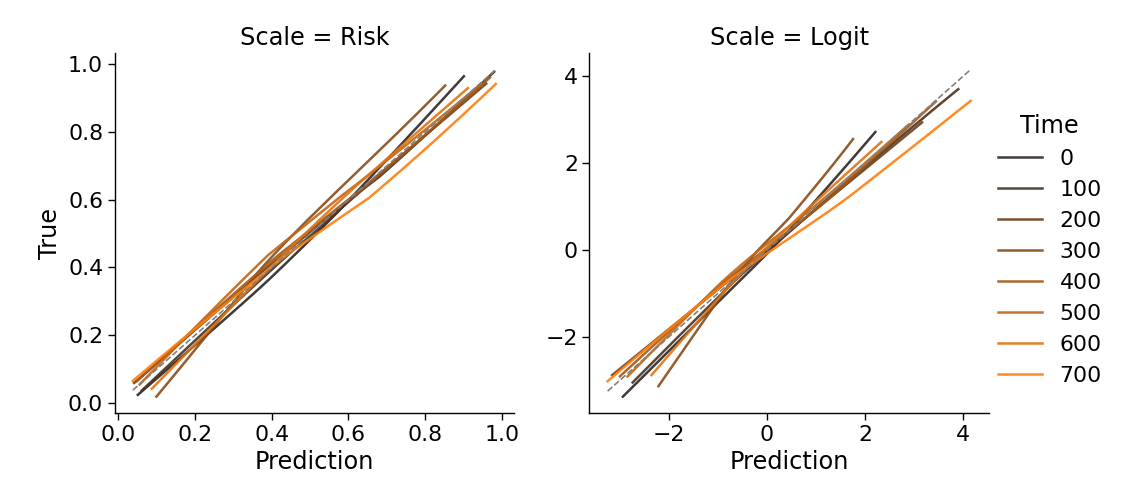}
	\includegraphics[width=0.2\textwidth]{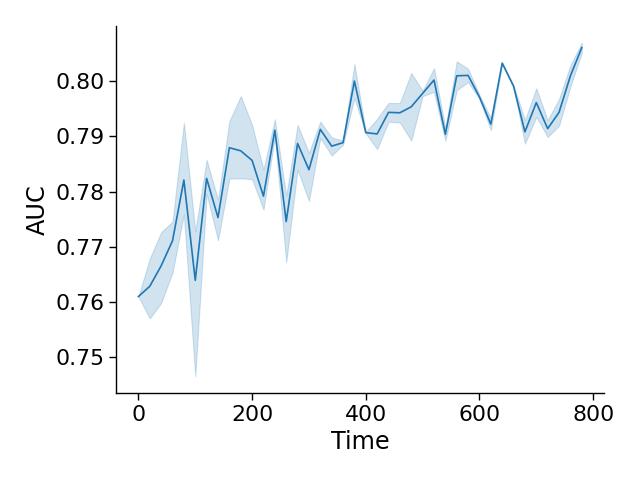}
	\includegraphics[width=0.2\textwidth]{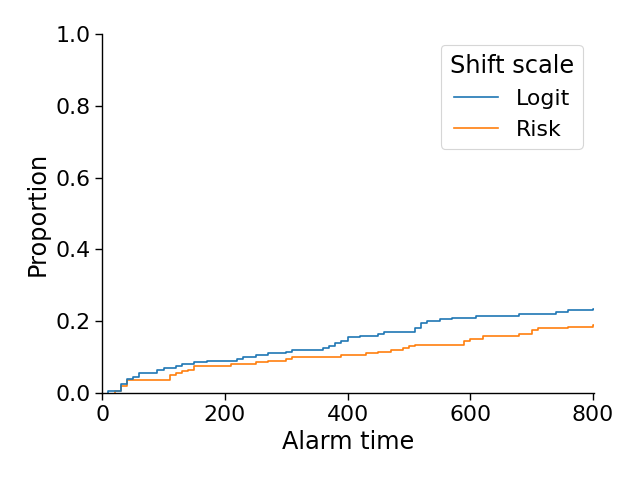}
	\includegraphics[width=0.2\textwidth]{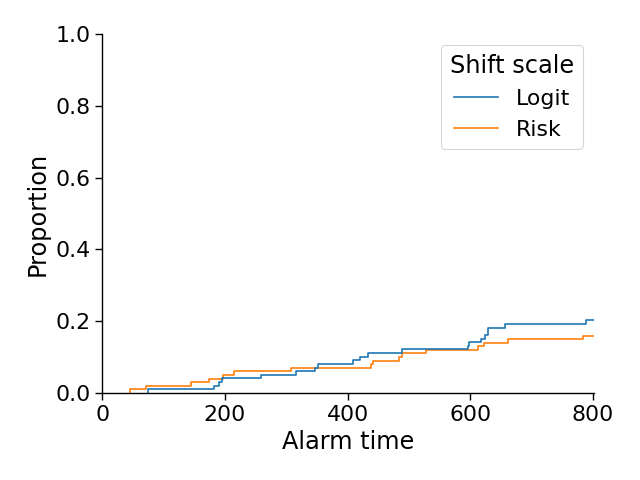}
	
	\caption{
		Monitoring continually retrained ML risk prediction algorithms using ridge-penalized logistic regression (top) and gradient boosted trees with Platt scaling (bottom).
		Calibration curves of model updates from a single replicate are plotted on the risk scale and logit scale in the first and second columns, respectively.
		The average AUC of the model updates are shown in the third column.
		The cumulative distribution of the alarm times are shown for score-based CUSUM monitoring (fourth column) and Bayesian changepoint monitoring (fifth column), which were implemented to detect for shifts on either the risk scale or the logit scale.
	}
	\label{fig:monitor_retrained_null}
\end{figure}

For ridge-penalized logistic regression, the CUSUM alarm rate is close to the nominal rate of $\alpha = 0.1$ when monitoring on the risk scale and increases to 0.16 when monitoring on the logit scale (Figure~\ref{fig:monitor_retrained_null}).
Similar trends were observed when using Bayesian monitoring.
The difference in alarm rates is explained by the calibration curves of the model updates: these curves coincide nearly perfectly with the ideal diagonal line on the risk scale but deviate slightly on the logit scale.
As such, we recommend monitoring shifts on the risk scale since (i) the risk scale is more meaningful in practice and (ii) it is less sensitive to model misspecification.
These results are highly promising, since it shows how the false alarm rate for a continually updated model can actually match that of a locked model and allow for steady improvement in model discrimination over time.

For GBT, we found that Platt scaling improved model calibration but was unable to maintain perfect calibration uniformly over time.
Using the CUSUM, alarm rates were 0.19 and 0.24 when monitoring on the risk and logit scales, respectively; results from Bayesian inference were similar.
This highlights how one-time model recalibration is insufficient for maintaining model reliability.
Instead, one may need online calibration methods such as \citep{Feng2022-wf}.

\subsection{Big shifts. Little shifts. Locked models. Evolving models.}
\label{sec:magnitude}
We now investigate alarm rates in the presence of a structural change.
We vary both the magnitude of the structural change and examine how model retraining may affect alarm rates.
In simulations \texttt{large-shift} and \texttt{little-shift}, we shrink the coefficients of the outcome model at $t=50$ by 80\% and 50\%, respectively.
To retrain the ML model in nonstationary settings, we use an exponentially weighted average forecaster (EWAF), a classic online learning algorithm that can adapt to adversarial dataset shifts \citep{Cesa-Bianchi2006-tl, Feng2021-mf}.

\begin{figure}
	\centering
	\textit{Big Shift}\\
	\includegraphics[width=0.22\linewidth]{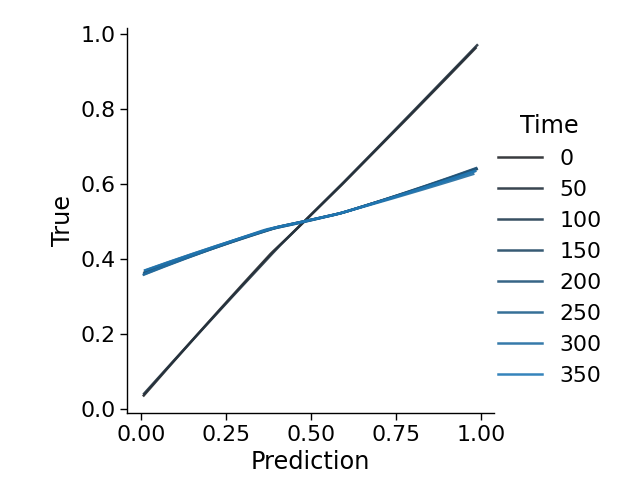}
	\includegraphics[width=0.22\linewidth]{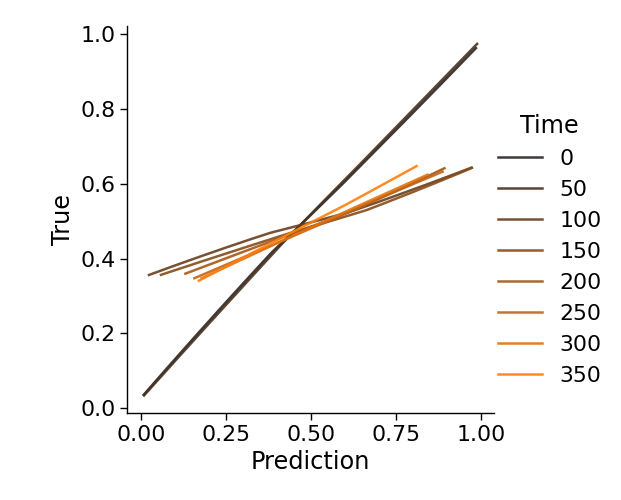}
	\includegraphics[width=0.24\linewidth]{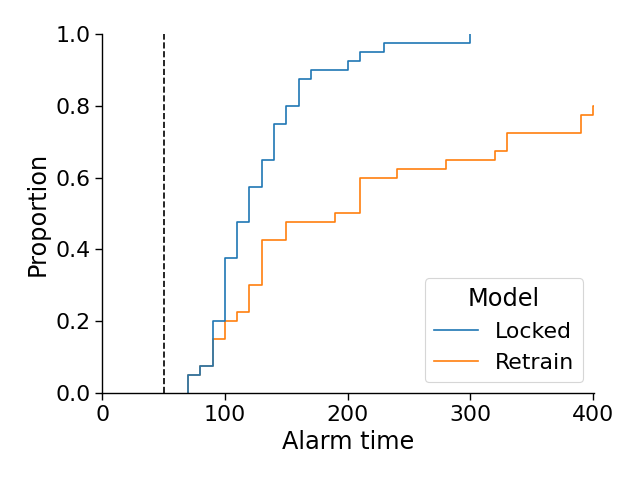}
	\includegraphics[width=0.24\linewidth]{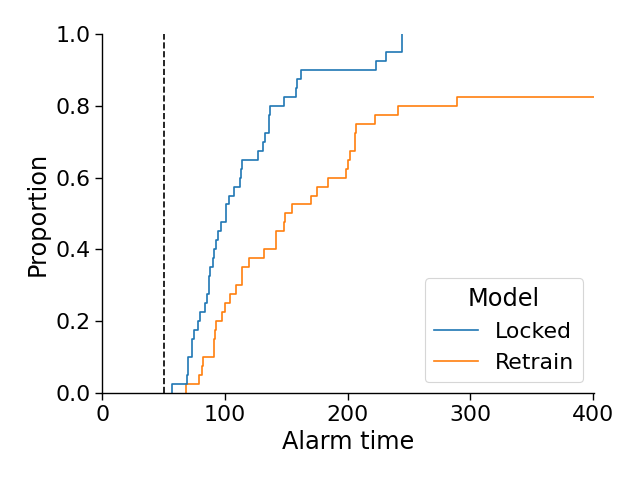}
	
	\textit{Small Shift}\\
	\includegraphics[width=0.22\linewidth]{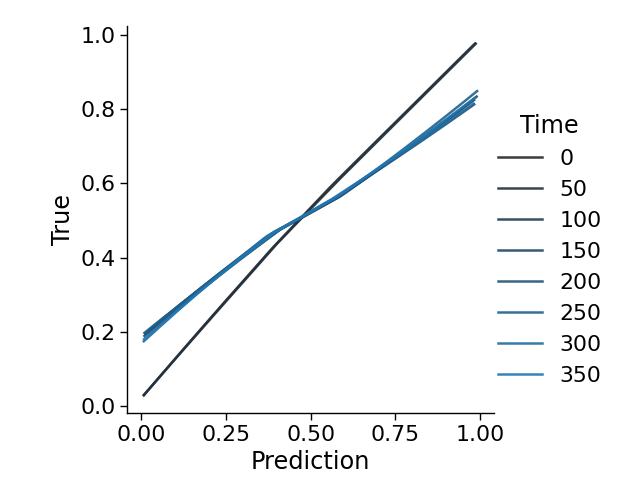}
	\includegraphics[width=0.22\linewidth]{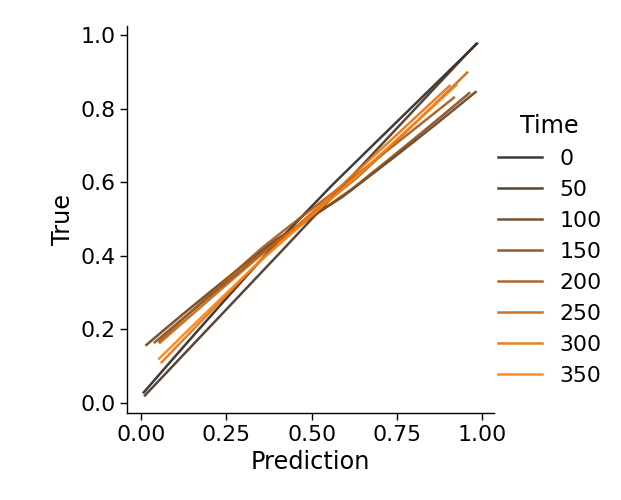}
	\includegraphics[width=0.24\linewidth]{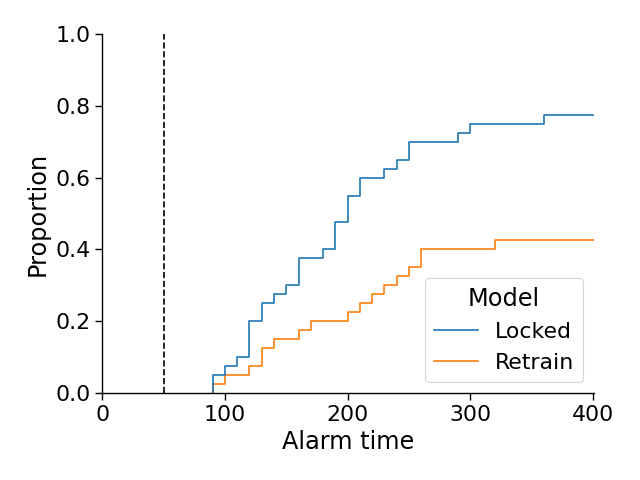}
	\includegraphics[width=0.24\linewidth]{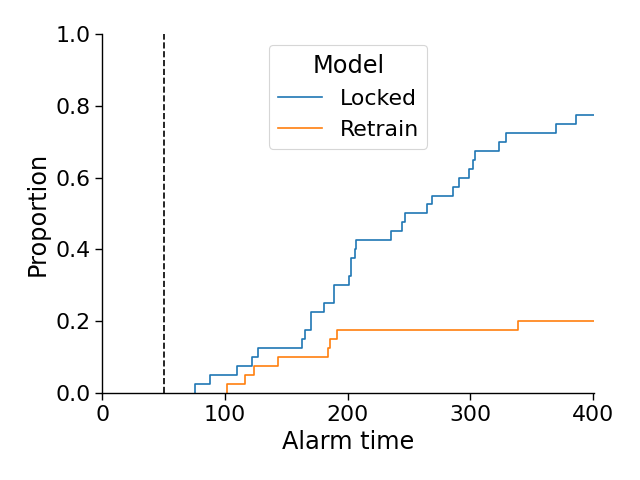}
	
	\caption{
		Monitoring locked versus continually retrained models in the presence of big and small distribution shifts (top and bottom rows, respectively).
		The calibration curves of the locked and continually retrained models (first and second columns, respectively) are plotted over time, including before and after the changepoint.
		The third and fourth columns show the cumulative distribution of alarm times for score-based CUSUM monitoring and Bayesian changepoint monitoring, respectively.
		The changepoint is indicated in the right two columns by a dashed vertical line.
	}
	\label{fig:magnitude}
\end{figure}

Across all scenarios, we find that the time to an alarm is longer if the magnitude of the structural change is smaller (Figure~\ref{fig:magnitude}).
Perhaps more interestingly, we consistently find that model retraining increases the time to an alarm, depending on the quality of the model updates.
In the \texttt{large shift} simulation, the median time to alarm is delayed by 80 observations, because the number of observations needed to detect a shift is small whereas that needed to restore model calibration is much larger.
In contrast, the alarm rate drops to below 50\% with model retraining in the \texttt{small-shift} simulation, because detection requires many more observations and the EWAF was often able to recalibrate the model during this time.
Similar patterns can be seen when using Bayesian monitoring.
In summary, the alarm time can be viewed as the result of a competition between model monitoring and updating.
By designing a sufficiently fast and adaptive model updating procedure, one can significantly extend the total lifetime of a risk prediction algorithm.

\subsection{Clinician trust}
\label{sec:trust}

Intuitively, clinician trust can interfere with our ability to detect performance decay.
To investigate this, we simulate three levels of clinician trust.
In the \texttt{No-trust} simulation, the treatment propensity is uncorrelated with the model's prediction.
In \texttt{Calibrated-trust}, the rate of treatment is the same as the predicted risk.
In \texttt{Over-trust}, the decision to treat essentially thresholds on the model's predictions, so that nearly everyone above 50\% predicted risk is treated and nearly everyone below 50\% is untreated.
We also simulate two types of structural change: one in which shifts in the risk are symmetric (\texttt{symmetric-shift}) and one in which risks shift the most among patients with the highest initial risk (\texttt{high-risk}).

\begin{figure}
	\centering
	\textit{Shifts are largest in the high- and low-risk populations}\\
	\includegraphics[width=0.28\linewidth]{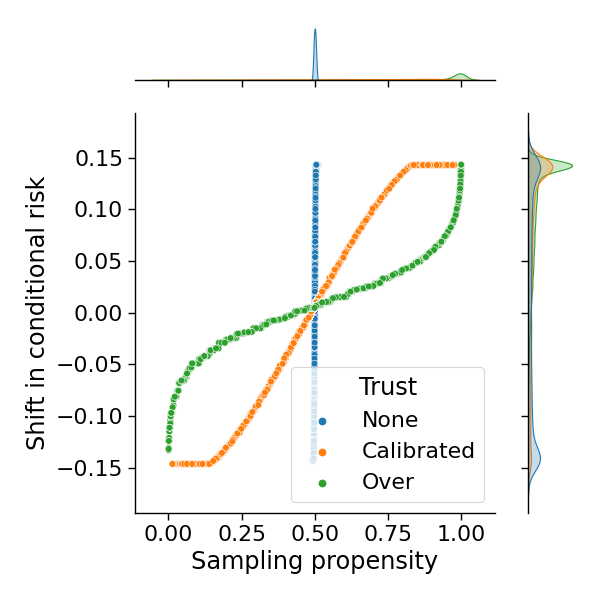}
	\includegraphics[width=0.32\linewidth]{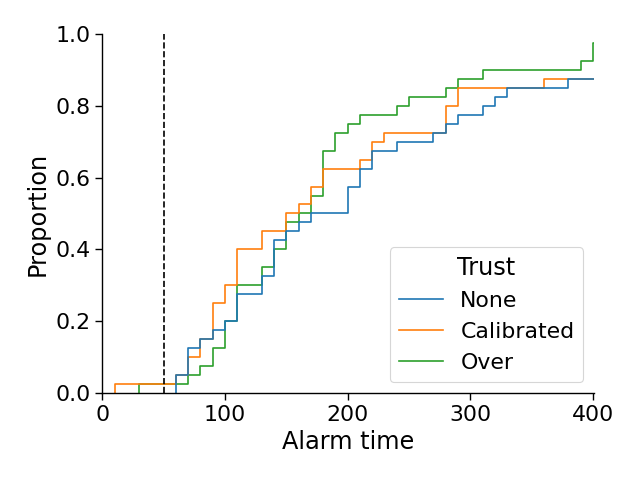}
	\includegraphics[width=0.32\linewidth]{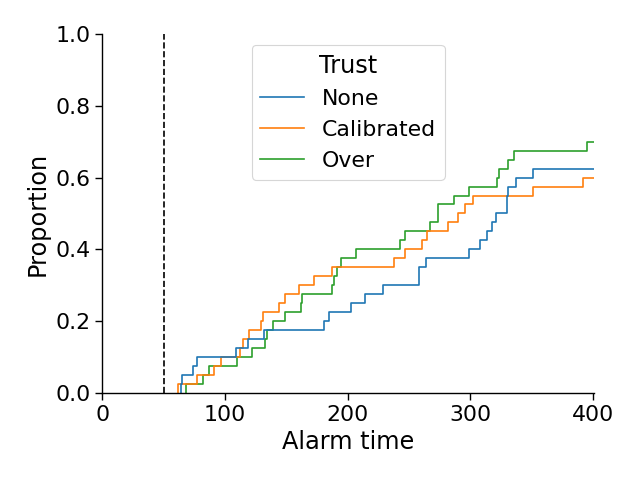}
	
	\textit{Shifts are largest in only the high-risk population}\\
	\includegraphics[width=0.28\linewidth]{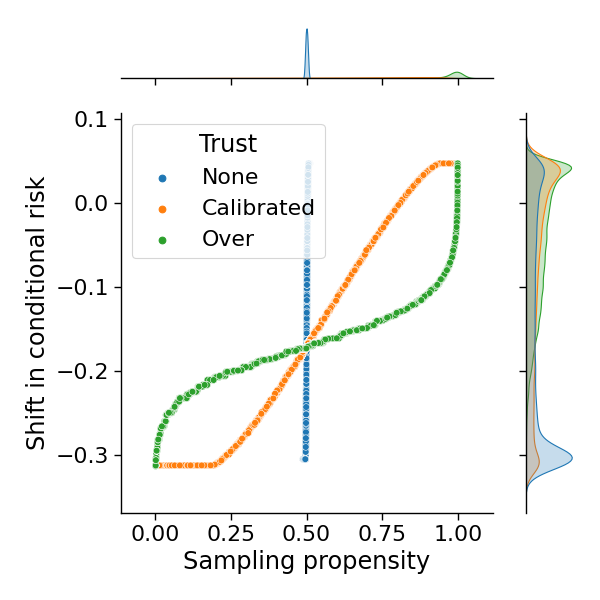}
	\includegraphics[width=0.32\linewidth]{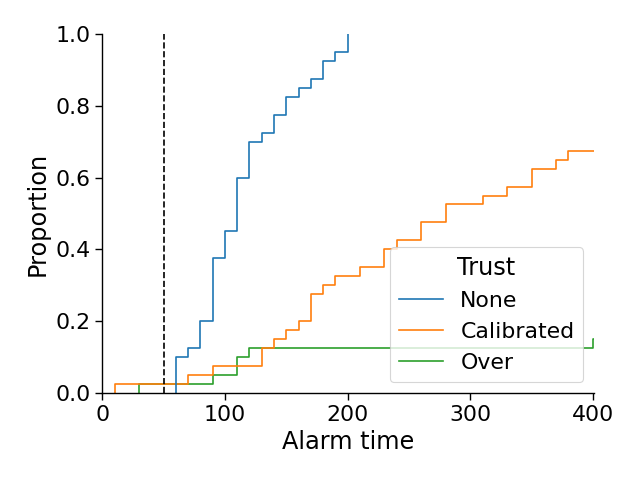}
	\includegraphics[width=0.32\linewidth]{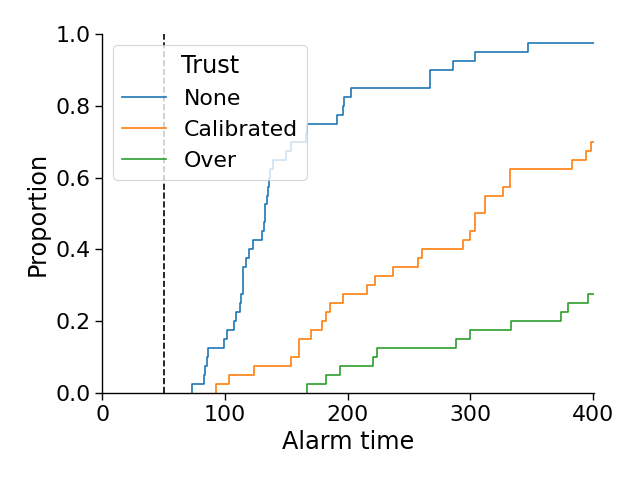}
	\caption{
		We simulate no trust, calibrated trust, and over trust in the ML algorithm to understand its impact on detection delay.
		We simulate two types of structural changes in the outcome model: shifts in the conditional risk that is symmetric and largest among patients with the lowest and highest risks (top row); and shifts that are largest only among patients with the highest risks (bottom row).
		The left column shows how shifts in the conditional risk vary with respect to the probability of a patient being assigned SOC, and thus their probability of being sampled for monitoring.
		The middle and right columns show the cumulative distribution of alarm times using score-based CUSUM and Bayesian monitoring, respectively.
	}
	\label{fig:trust}
\end{figure}

In the \texttt{symmetric-shift} simulation, increasing clinician trust had little impact on alarm times, and may have even decreased detection delay (Figure~\ref{fig:trust} top).
On the other hand, increasing clinician trust substantially increased detection delay in the \texttt{high-risk} simulation (Figure~\ref{fig:trust} bottom).
This difference is explained by the distribution of risk shifts in the monitored patient population.
In the \texttt{symmetric-shift} simulation, increasing trust tended to increase the representation of patients with lower predicted risks, which corresponds to those experiencing bigger shifts in their risk.
Whereas in the \texttt{high-risk} simulation, increasing trust increased the representation of subjects experiencing smaller shifts.

These results have an important practical implication: when designing monitoring strategies, we should incorporate any prior knowledge regarding which populations are likely to experience distribution shifts.
If changes are likely to be concentrated among subjects with high treatment propensities, passive monitoring of SOC-only data may not sufficient.
Instead, one may need to actively augment the amount of information in the monitoring data.
Prior work has suggested educating healthcare providers on the appropriate use of ML and warning against over-reliance \citep{Finlayson2021-ad} or randomizing patients to receive no ML-based recommendations when it is deemed ethical \citep{Harris2022-vi}.

\subsection{Sensitivity analysis of the time-constant selection bias assumption}
\label{sec:violate}

Finally, we explore how violations of the time-constant selection bias assumption can impact detection delay of a structural change.
We first simulate data that satisfies this time-constancy assumption based on Example~\ref{example:equiconf}. 
Then we introduce violations of this assumption by adding an edge from the unmeasured confounder $U_t$ to the final treatment decision $A_{t}$ in the DAG and setting a non-zero edge weight at times $t = 100$ or $300$.
Such shifts in the propensity model could occur if, say, a clinician suspects performance of the ML algorithm has decayed and decides to place more weight in the unmeasured risk factor $U_t$. 

Because the simulated violation is designed to dampen the shift observed in the data, we find that detection delay increases as the violation occurs earlier in time (Figure~\ref{fig:timeconstant_power}).
In the worst case scenario, the structural change \textit{and} the shift in the treatment propensities occurs at the same time ($t = 100$) and power drops by 30\%.
Nevertheless, such a scenario is unlikely to happen in practice since it assumes clinicians know exactly when performance decays.

In this simulation, we find that the power of Bayesian monitoring is much lower than that for the score-based CUSUM.
This is likely due to the sensitivity of Bayesian inference to model misspecification: the monitoring model assumes a single changepoint, whereas the observed data distribution shifts at two time points.
Consequently, its power drops by over 40\%.
In fact, even without violations of the time-constancy assumption, the power of the Bayesian procedure is much lower than that of the score-based CUSUM.
This may be due to difficulties in performing posterior inference for \eqref{eq:mdl_tc}, which is only partially differentiable.

\begin{figure}
	\centering
	\includegraphics[width=0.33\textwidth]{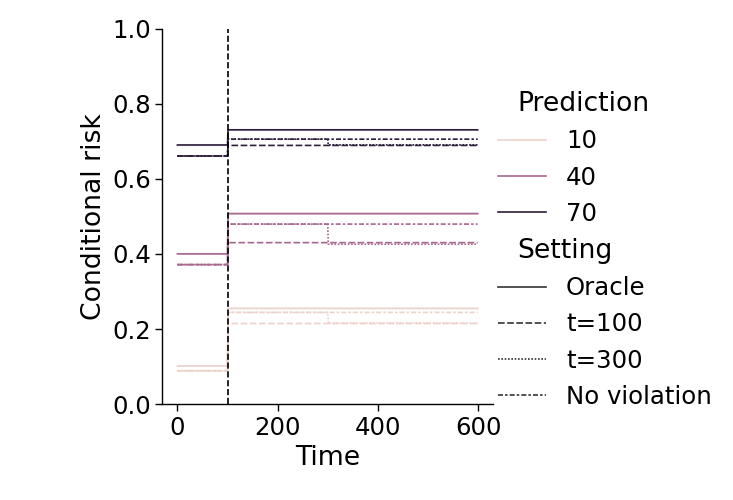}
	\includegraphics[width=0.3\textwidth]{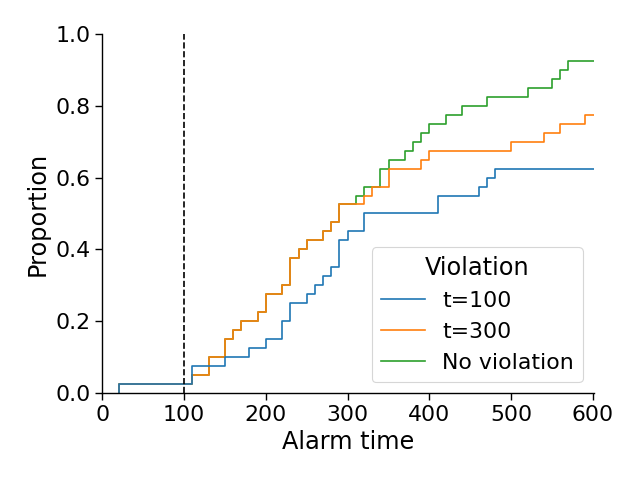}
	\includegraphics[width=0.3\textwidth]{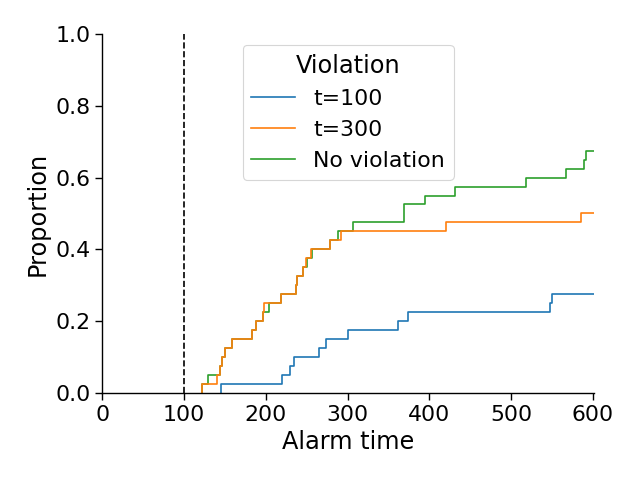}
	
	\caption{
		Monitoring a structural change at time $t = 100$ (dashed vertical line) when the assumption of time-constant selection bias is violated.
		We introduce a shift in the treatment propensities that violates the time-constant selection bias assumption at times $t = 100$ and $t=300$.
		We also simulate no violation of the time-constant selection bias by never introducing this shift in the treatment propensities.
		For different risk prediction values, we plot the conditional risks among the SOC-only population over time, which are biased for those among the general population.
		We also plot the oracle conditional risk for comparison.
		The middle and right columns show the cumulative distribution of alarm times for score-based CUSUM monitoring and Bayesian monitoring, respectively.
	}
	\label{fig:timeconstant_power}
\end{figure}

\section{Monitoring a ML-based PONV risk calculator}

Postoperative nausea and/or vomiting (PONV) is one of the most common side effects of anesthesia.
To reduce PONV rates, various risk prediction models have been developed to guide the use of antiemetics. 
Here we simulate monitoring a ML-based PONV risk calculator based on retrospective data from the UCSF MPOG registry ($n = 2434$).
We define a patient as receiving additional care if they received at least two antiemetics. 

Using data from January 2018 to May 2019, we trained a random forest (RF) to predict risk of PONV based on preoperative variables, including biological sex, smoking status, age, ASA score, and blood test results.
We then locked the model, used the first 200 patients to initialize the monitoring procedures, and started monitoring mid-December 2019. 
We suppose the conditional exchangeability assumption holds with respect to $\hat{f}_t(X_t)$ and modeled the data using \eqref{eq:mdl_de}.
Control limits were set so that the false alarm rate/probability is 20\%.

\begin{figure}
	\centering
	\textit{Monitoring a locked model}\\
	\includegraphics[width=0.35\textwidth]{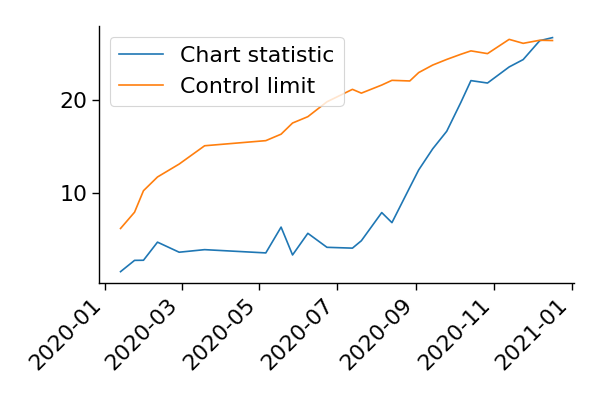}
	\includegraphics[width=0.35\textwidth]{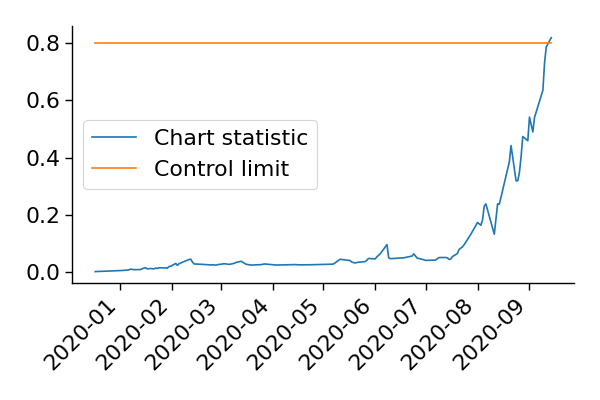}
	
	\textit{Monitoring a continually updated model}\\
	\includegraphics[width=0.35\textwidth]{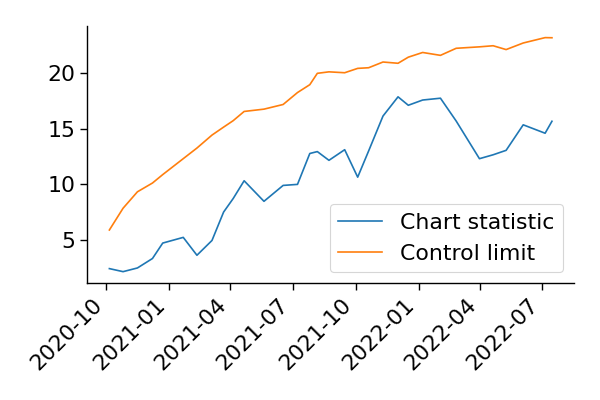}
	\includegraphics[width=0.35\textwidth]{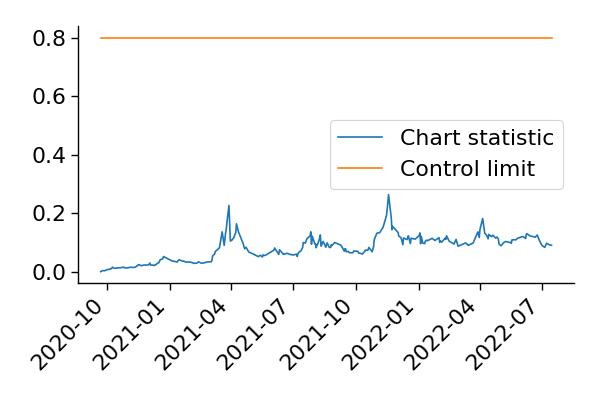}
	
	\includegraphics[width=0.35\textwidth]{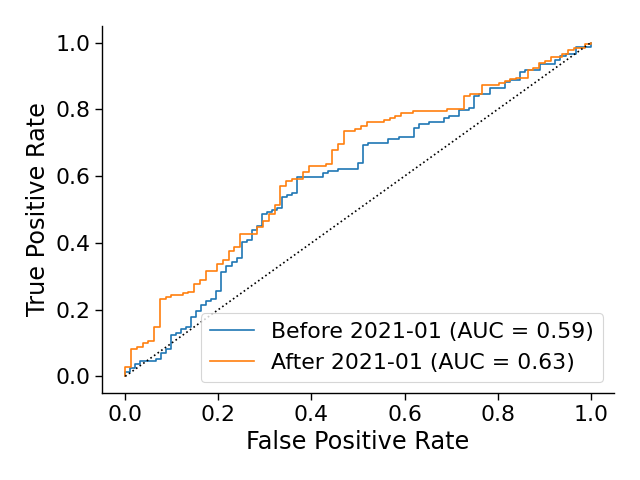}
	\includegraphics[width=0.35\textwidth]{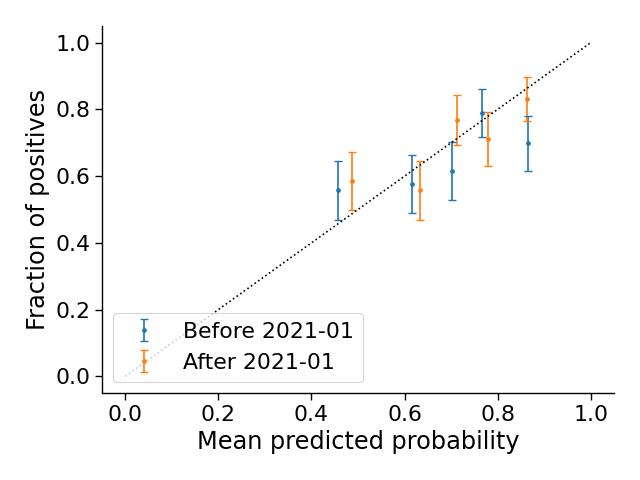}
	
	\caption{
		Control charts for monitoring a ML-based risk prediction model for Post-operative Nausea and Vomiting (PONV) using score-based monitoring (left) and Bayesian inference (right).
		The top control charts monitor a locked model, starting before the COVID pandemic.
		The control charts in the second row monitor the continually retrained model, starting midway through the pandemic.
		The ROC and calibration curves of the averaged models before and after January 2021 are shown in the bottom row.
	}
	\label{fig:ponv}
\end{figure}

The score-based and Bayesian monitoring procedures fired alarms in late 2020 (Figure~\ref{fig:ponv} top).
Both control charts suggest the shift started in May 2020, as their chart statistics started to increase at this time.
The detection of a performance shift during the COVID-19 pandemic is not unexpected and may be explained by many causes: (i) the anesthesia department had implemented changes in antiemetic medication administration; (ii) there was a big shift in the type of patients who received surgery during this time because of the pandemic; and (iii) exposure to the SARS-Cov2 virus affected the overall health of many patients.
So if this ML algorithm was indeed deployed in the hospital, it would be important to have a dedicated quality assurange/improvement (QA/QI) team to investigate what the root causes are and determine which corrective actions to take.

We then considered a scenario where the RF was retrained every 10 observations, where we split the data stream with 40\% dedicated for model retraining and 60\% for model monitoring.
We initialized the monitoring procedure using observations from 200 patients starting July 2019.
Due to data splitting, initialization spanned a longer time period and monitoring began October 2020.
Although the AUC of the RF dropped to 0.59 in the beginning of the pandemic due to shifts in the data, continual model retraining improved the AUC to 0.63 (Figure~\ref{fig:ponv} bottom).
Moreover, no shifts in the calibration curve were detected during this time.

\section{Related work}
The problem of CMI is closely related to the problem of verification bias \citep{Begg1983-bq, Zhou1994-ye, Alonzo2005-la}, which arises when disease status is only verified for a subset of patients.
Verification bias is typically presented using a missing data framework.
However, given the fundamental link missing data and causal inference,  the assumptions and techniques used to address CMI and verification bias share many similarities.
For instance, our assumption of conditional exchangeability can be viewed as generalization of the missing at random assumption.

Given this connection with the missing data literature, our work is also closely related to monitoring time-to-event data \citep{Sego2009-qd, Gandy2010-zr, Sun2014-yh}, many of which make similar conditional independence assumptions.
However, prior work assume that the pre-change distribution is known, that the structural change has a very specific form that is summarized by a single parameter (e.g. a proportional increase in the hazard), and that the sequence of predictors are stationary.
None of these necessarily hold in our setting.
Recently, \citet{Xue2021-js} proposed a batch-sequential procedure for monitoring time-to-event data that relaxes many of these assumptions; however, it relies on large batch sizes to justify asymptotic approximations.
In contrast, the score-based CUSUM procedure can be used in a fully sequential manner.

Finally, we note that this work is part of the growing literature on sequential testing of causal quantities using observational data \citep{Li2011-bq, Cook2015-pd, Waudby-Smith2021-rz}.
Unlike these works, our goal is test for structural change rather than the value of a time-constant parameter.

\section{Discussion}

Although CMI can complicate performance monitoring of risk prediction models, we have shown that CMI is ignorable when monitoring conditional performance measures under either the assumption of conditional exchangeability or time-constant selection bias.
We introduce a new score-based CUSUM procedure with DCLs that provides false alarm rate control, even if the ML algorithm and/or the clinician's interactions with the algorithm evolve over time.

Whereas prior work on performance monitoring has been restricted to locked algorithms \citep{Feng2022-mk}, this work extends these ideas to monitor continually updated algorithms.
By wrapping online learning methods within a monitoring framework, we greatly strengthen the performance guarantees of these algorithms.
In particular, many online learning methods only control average performance over long time periods and do not protect against sudden performance drops  \citep{Cesa-Bianchi2006-tl, Feng2021-mf}.
Using monitoring procedures, we can address the latter.

Many issues warrant further investigation.
Although we found the score-based CUSUM to be somewhat robust to model misspecification, it will be important to further improve robustness of the method.
We have also assumed that the treatment option is always clearly delineated and that patient outcomes are observed immediately.
This may not hold in certain settings, so future work should consider more complex treatment variables and account for delays in reporting.
Finally, monitoring using SOC-only data is most powerful when treatment decisions do not mask areas of major performance decay.
Future work should explore ways to address extreme treatment propensities, such as the inclusion of other data sources and randomization.

\section*{Acknowledgments}
This work was greatly improved by helpful suggestions and feedback from Alan Hubbard, Fan Xia, and Si Wen. We are grateful to Daniel Lazzareschi for sharing the UCSF MPOG data.

\section*{Funding}
This work was supported by the Food and Drug Administration (FDA) of the U.S. Department of Health
and Human Services (HHS) as part of a financial assistance award Center of Excellence in Regulatory
Science and Innovation grant to University of California, San Francisco (UCSF) and Stanford University,
(U01FD005978). The contents are those of the author(s) and do not necessarily represent the official views
of, nor an endorsement, by FDA/HHS, or the U.S. Government.

\bibliographystyle{plainnat}
\bibliography{main}

\begin{thebibliography}{46}
\providecommand{\natexlab}[1]{#1}
\providecommand{\url}[1]{\texttt{#1}}
\expandafter\ifx\csname urlstyle\endcsname\relax
  \providecommand{\doi}[1]{doi: #1}\else
  \providecommand{\doi}{doi: \begingroup \urlstyle{rm}\Url}\fi

\bibitem[Adams et~al.(2022)Adams, Henry, Sridharan, and
  Soleimani]{Adams2022-rm}
Roy Adams, Katharine~E Henry, Anirudh Sridharan, and Hossein et~al. Soleimani.
\newblock Prospective, multi-site study of patient outcomes after
  implementation of the {TREWS} machine learning-based early warning system for
  sepsis.
\newblock \emph{Nat. Med.}, 28\penalty0 (7):\penalty0 1455--1460, July 2022.
\newblock URL \url{http://dx.doi.org/10.1038/s41591-022-01894-0}.

\bibitem[Alonzo and Pepe(2005)]{Alonzo2005-la}
Todd~A Alonzo and Margaret~Sullivan Pepe.
\newblock Assessing accuracy of a continuous screening test in the presence of
  verification bias.
\newblock \emph{J. R. Stat. Soc. Ser. C Appl. Stat.}, 54\penalty0 (1):\penalty0
  173--190, January 2005.
\newblock URL
  \url{https://onlinelibrary.wiley.com/doi/10.1111/j.1467-9876.2005.00477.x}.

\bibitem[Begg and Greenes(1983)]{Begg1983-bq}
Colin~B Begg and Robert~A Greenes.
\newblock Assessment of diagnostic tests when disease verification is subject
  to selection bias.
\newblock \emph{Biometrics}, 39\penalty0 (1):\penalty0 207--215, 1983.
\newblock URL \url{http://www.jstor.org/stable/2530820}.

\bibitem[Bhattacharya(1994)]{Bhattacharya1994-cp}
P~K Bhattacharya.
\newblock Some aspects of change-point analysis.
\newblock In \emph{Institute of Mathematical Statistics Lecture Notes -
  Monograph Series}, Lecture notes-monograph series, pages 28--56. Institute of
  Mathematical Statistics, Hayward, CA, 1994.
\newblock URL
  \url{https://projecteuclid.org/ebook/download?urlId=10.1214/lnms/1215463112&isFullBook=false}.

\bibitem[Carpenter et~al.(2017)Carpenter, Gelman, Hoffman, Lee, Goodrich,
  Betancourt, Brubaker, Guo, Li, and Riddell]{Carpenter2017-ze}
Bob Carpenter, Andrew Gelman, Matthew~D Hoffman, Daniel Lee, Ben Goodrich,
  Michael Betancourt, Marcus Brubaker, Jiqiang Guo, Peter Li, and Allen
  Riddell.
\newblock Stan: A probabilistic programming language.
\newblock \emph{J. Stat. Softw.}, 76\penalty0 (1), 2017.
\newblock URL \url{https://www.osti.gov/biblio/1430202}.

\bibitem[Cesa-Bianchi and Lugosi(2006)]{Cesa-Bianchi2006-tl}
Nicolo Cesa-Bianchi and Gabor Lugosi.
\newblock \emph{Prediction, Learning, and Games}.
\newblock Cambridge University Press, March 2006.
\newblock URL
  \url{https://play.google.com/store/books/details?id=zDnRBlazhfYC}.

\bibitem[Chu et~al.(1996)Chu, Stinchcombe, and White]{Chu1996-fg}
Chia-Shang~James Chu, Maxwell Stinchcombe, and Halbert White.
\newblock Monitoring structural change.
\newblock \emph{Econometrica}, 64\penalty0 (5):\penalty0 1045--1065, 1996.
\newblock URL \url{http://www.jstor.org/stable/2171955}.

\bibitem[Cook et~al.(2015)Cook, Wellman, Nelson, Jackson, and
  Tiwari]{Cook2015-pd}
Andrea~J Cook, Robert~D Wellman, Jennifer~C Nelson, Lisa~A Jackson, and Ram~C
  Tiwari.
\newblock Group sequential method for observational data by using generalized
  estimating equations: application to vaccine safety datalink.
\newblock \emph{J. R. Stat. Soc. Ser. C Appl. Stat.}, 64\penalty0 (2):\penalty0
  319--338, 2015.
\newblock URL \url{http://www.jstor.org/stable/24771896}.

\bibitem[Davis et~al.(2017)Davis, Lasko, Chen, Siew, and Matheny]{Davis2017-bl}
Sharon~E Davis, Thomas~A Lasko, Guanhua Chen, Edward~D Siew, and Michael~E
  Matheny.
\newblock Calibration drift in regression and machine learning models for acute
  kidney injury.
\newblock \emph{J. Am. Med. Inform. Assoc.}, 24\penalty0 (6):\penalty0
  1052--1061, November 2017.
\newblock URL \url{http://dx.doi.org/10.1093/jamia/ocx030}.

\bibitem[Dette and G{\"o}smann(2020)]{Dette2020-mr}
Holger Dette and Josua G{\"o}smann.
\newblock A likelihood ratio approach to sequential change point detection for
  a general class of parameters.
\newblock \emph{J. Am. Stat. Assoc.}, 115\penalty0 (531):\penalty0 1361--1377,
  July 2020.
\newblock URL \url{https://doi.org/10.1080/01621459.2019.1630562}.

\bibitem[Feng(2021)]{Feng2021-mf}
Jean Feng.
\newblock Learning to safely approve updates to machine learning algorithms.
\newblock In \emph{Proceedings of the Conference on Health, Inference, and
  Learning}, CHIL '21, pages 164--173, New York, NY, USA, April 2021.
  Association for Computing Machinery.
\newblock URL \url{https://doi.org/10.1145/3450439.3451864}.

\bibitem[Feng et~al.(2022{\natexlab{a}})Feng, Gossmann, Sahiner, and
  Pirracchio]{Feng2022-wf}
Jean Feng, Alexej Gossmann, Berkman Sahiner, and Romain Pirracchio.
\newblock Bayesian logistic regression for online recalibration and revision of
  risk prediction models with performance guarantees.
\newblock \emph{Journal of the American Medical Informatics Association},
  2022{\natexlab{a}}.
\newblock URL \url{http://dx.doi.org/10.1093/jamia/ocab280}.

\bibitem[Feng et~al.(2022{\natexlab{b}})Feng, Phillips, Malenica, Bishara,
  Hubbard, Celi, and Pirracchio]{Feng2022-mk}
Jean Feng, Rachael~V Phillips, Ivana Malenica, Andrew Bishara, Alan~E Hubbard,
  Leo~A Celi, and Romain Pirracchio.
\newblock Clinical artificial intelligence quality improvement: towards
  continual monitoring and updating of {AI} algorithms in healthcare.
\newblock \emph{npj Digital Medicine}, 5\penalty0 (1):\penalty0 1--9, May
  2022{\natexlab{b}}.
\newblock URL \url{https://www.nature.com/articles/s41746-022-00611-y}.

\bibitem[Finlayson et~al.(2021)Finlayson, Subbaswamy, Singh, Bowers, Kupke,
  Zittrain, Kohane, and Saria]{Finlayson2021-ad}
Samuel~G Finlayson, Adarsh Subbaswamy, Karandeep Singh, John Bowers, Annabel
  Kupke, Jonathan Zittrain, Isaac~S Kohane, and Suchi Saria.
\newblock The clinician and dataset shift in artificial intelligence.
\newblock \emph{N. Engl. J. Med.}, 385\penalty0 (3):\penalty0 283--286, July
  2021.
\newblock URL \url{http://dx.doi.org/10.1056/NEJMc2104626}.

\bibitem[Gandy et~al.(2010)Gandy, Kval{\o}y, Bottle, and Zhou]{Gandy2010-zr}
A~Gandy, J~T Kval{\o}y, A~Bottle, and F~Zhou.
\newblock Risk-adjusted monitoring of time to event.
\newblock \emph{Biometrika}, 97\penalty0 (2):\penalty0 375--388, June 2010.
\newblock URL
  \url{https://academic-oup-com.ucsf.idm.oclc.org/biomet/article-pdf/97/2/375/583479/asq004.pdf}.

\bibitem[Gombay(2017)]{Gombay2017-wb}
Edit Gombay.
\newblock Editor's special invited paper: On the efficient score vector in
  sequential monitoring.
\newblock \emph{Sequential Analysis}, 36\penalty0 (4):\penalty0 435--466,
  October 2017.
\newblock URL \url{https://doi.org/10.1080/07474946.2017.1394728}.

\bibitem[Harris et~al.(2022)Harris, Bonnici, Keen, Lilaonitkul, White, and
  Swanepoel]{Harris2022-vi}
Steve Harris, Tim Bonnici, Thomas Keen, Watjana Lilaonitkul, Mark~J White, and
  Nel Swanepoel.
\newblock Clinical deployment environments: Five pillars of translational
  machine learning for health.
\newblock \emph{Frontiers in Digital Health}, 4, 2022.
\newblock URL
  \url{https://www.frontiersin.org/articles/10.3389/fdgth.2022.939292}.

\bibitem[Henry et~al.(2022)Henry, Adams, and Parent]{Henry2022-pg}
Katharine~E Henry, Roy Adams, and Cassandra et~al. Parent.
\newblock Factors driving provider adoption of the {TREWS} machine
  learning-based early warning system and its effects on sepsis treatment
  timing.
\newblock \emph{Nat. Med.}, 28\penalty0 (7):\penalty0 1447--1454, July 2022.
\newblock URL \url{http://dx.doi.org/10.1038/s41591-022-01895-z}.

\bibitem[Hern{\'a}n et~al.(2004)Hern{\'a}n, Hern{\'a}ndez-D{\'\i}az, and
  Robins]{Hernan2004-ll}
Miguel~A Hern{\'a}n, Sonia Hern{\'a}ndez-D{\'\i}az, and James~M Robins.
\newblock A structural approach to selection bias.
\newblock \emph{Epidemiology}, 15\penalty0 (5):\penalty0 615--625, September
  2004.
\newblock URL \url{http://dx.doi.org/10.1097/01.ede.0000135174.63482.43}.

\bibitem[Hickey et~al.(2013)Hickey, Grant, Murphy, Bhabra, Pagano, McAllister,
  Buchan, and Bridgewater]{Hickey2013-qm}
Graeme~L Hickey, Stuart~W Grant, Gavin~J Murphy, Moninder Bhabra, Domenico
  Pagano, Katherine McAllister, Iain Buchan, and Ben Bridgewater.
\newblock Dynamic trends in cardiac surgery: why the logistic {EuroSCORE} is no
  longer suitable for contemporary cardiac surgery and implications for future
  risk models.
\newblock \emph{Eur. J. Cardiothorac. Surg.}, 43\penalty0 (6):\penalty0
  1146--1152, June 2013.
\newblock URL \url{http://dx.doi.org/10.1093/ejcts/ezs584}.

\bibitem[Horv{\'a}th et~al.(2021)Horv{\'a}th, Miller, and Rice]{Horvath2021-db}
Lajos Horv{\'a}th, Curtis Miller, and Gregory Rice.
\newblock Detecting early or late changes in linear models with heteroscedastic
  errors.
\newblock \emph{Scand. Stat. Theory Appl.}, 48\penalty0 (2):\penalty0 577--609,
  June 2021.
\newblock URL \url{https://onlinelibrary.wiley.com/doi/10.1111/sjos.12507}.

\bibitem[Kahn et~al.(1996)Kahn, Bailey, Steib, Fraser, and
  Dunagan]{Kahn1996-yx}
M~G Kahn, T~C Bailey, S~A Steib, V~J Fraser, and W~C Dunagan.
\newblock Statistical process control methods for expert system performance
  monitoring.
\newblock \emph{J. Am. Med. Inform. Assoc.}, 3\penalty0 (4):\penalty0 258--269,
  July 1996.
\newblock URL \url{http://dx.doi.org/10.1136/jamia.1996.96413133}.

\bibitem[Lenert et~al.(2019)Lenert, Matheny, and Walsh]{Lenert2019-uf}
Matthew~C Lenert, Michael~E Matheny, and Colin~G Walsh.
\newblock Prognostic models will be victims of their own success, unless...
\newblock \emph{J. Am. Med. Inform. Assoc.}, 26\penalty0 (12):\penalty0
  1645--1650, December 2019.
\newblock URL \url{http://dx.doi.org/10.1093/jamia/ocz145}.

\bibitem[Li et~al.(2011)Li, Kulldorff, Nelson, and Cook]{Li2011-bq}
Lingling Li, Martin Kulldorff, Jennifer~C Nelson, and Andrea~J Cook.
\newblock A propensity {Score-Enhanced} sequential analytic method for
  comparative drug safety surveillance.
\newblock \emph{Statistics in Biosciences}, 2011.
\newblock URL
  \url{https://link.springer.com/content/pdf/10.1007/s12561-011-9034-5.pdf}.

\bibitem[Lorden and Pollak(2005)]{Lorden2005-bw}
Gary Lorden and Moshe Pollak.
\newblock Nonanticipating estimation applied to sequential analysis and
  changepoint detection.
\newblock \emph{Annals of Statistics}, 33\penalty0 (3):\penalty0 1422--1454,
  June 2005.
\newblock URL
  \url{https://projecteuclid.org/journals/annals-of-statistics/volume-33/issue-3/Nonanticipating-estimation-applied-to-sequential-analysis-and-changepoint-detection/10.1214/009053605000000183.full}.

\bibitem[McCormick et~al.(2012)McCormick, Raftery, Madigan, and
  Burd]{McCormick2012-sg}
Tyler~H McCormick, Adrian~E Raftery, David Madigan, and Randall~S Burd.
\newblock Dynamic logistic regression and dynamic model averaging for binary
  classification.
\newblock \emph{Biometrics}, 68\penalty0 (1):\penalty0 23--30, March 2012.
\newblock URL \url{http://dx.doi.org/10.1111/j.1541-0420.2011.01645.x}.

\bibitem[Page(1954)]{Page1954-gr}
E~S Page.
\newblock Continuous inspection schemes.
\newblock \emph{Biometrika}, 41\penalty0 (1/2):\penalty0 100--115, 1954.
\newblock URL \url{http://www.jstor.org/stable/2333009}.

\bibitem[Paxton et~al.(2013)Paxton, Niculescu-Mizil, and Saria]{Paxton2013-pv}
Chris Paxton, Alexandru Niculescu-Mizil, and Suchi Saria.
\newblock Developing predictive models using electronic medical records:
  challenges and pitfalls.
\newblock \emph{AMIA Annu. Symp. Proc.}, 2013:\penalty0 1109--1115, November
  2013.
\newblock URL \url{https://www.ncbi.nlm.nih.gov/pubmed/24551396}.

\bibitem[Perdomo et~al.(2020)Perdomo, Zrnic, Mendler-D{\"u}nner, and
  Hardt]{Perdomo2020-cz}
Juan Perdomo, Tijana Zrnic, Celestine Mendler-D{\"u}nner, and Moritz Hardt.
\newblock Performative prediction.
\newblock In Hal~Daum{\'e} Iii and Aarti Singh, editors, \emph{Proceedings of
  the 37th International Conference on Machine Learning}, volume 119 of
  \emph{Proceedings of Machine Learning Research}, pages 7599--7609. PMLR,
  2020.
\newblock URL \url{https://proceedings.mlr.press/v119/perdomo20a.html}.

\bibitem[Richardson and Robins(2013)]{Richardson2013-ol}
T~S Richardson and J~M Robins.
\newblock Single world intervention graphs ({SWIGs)}: A unification of the
  counterfactual and graphical approaches to causality.
\newblock \emph{Center for the Statistics and the Social Sciences}, 2013.
\newblock URL
  \url{https://citeseerx.ist.psu.edu/viewdoc/download?doi=10.1.1.294.7647&rep=rep1&type=pdf}.

\bibitem[Rubin(1976)]{Rubin1976-of}
Donald~B Rubin.
\newblock Inference and missing data.
\newblock \emph{Biometrika}, 63\penalty0 (3):\penalty0 581--592, December 1976.
\newblock URL
  \url{https://academic-oup-com.ucsf.idm.oclc.org/biomet/article-pdf/63/3/581/756166/63-3-581.pdf}.

\bibitem[Sego et~al.(2009)Sego, Reynolds, and Woodall]{Sego2009-qd}
Landon~H Sego, Marion~R Reynolds, Jr, and William~H Woodall.
\newblock Risk-adjusted monitoring of survival times.
\newblock \emph{Stat. Med.}, 28\penalty0 (9):\penalty0 1386--1401, April 2009.
\newblock URL \url{http://dx.doi.org/10.1002/sim.3546}.

\bibitem[Shiryaev(1963)]{Shiryaev1963-xq}
A~N Shiryaev.
\newblock On optimum methods in quickest detection problems.
\newblock \emph{Theory Probab. Appl.}, 8\penalty0 (1):\penalty0 22--46, January
  1963.
\newblock URL \url{https://doi.org/10.1137/1108002}.

\bibitem[Sun et~al.(2014)Sun, Kalbfleisch, and Schaubel]{Sun2014-yh}
Rena~Jie Sun, John~D Kalbfleisch, and Douglas~E Schaubel.
\newblock A weighted cumulative sum ({WCUSUM}) to monitor medical outcomes with
  dependent censoring.
\newblock \emph{Stat. Med.}, 33\penalty0 (18):\penalty0 3114--3129, August
  2014.
\newblock URL \url{http://dx.doi.org/10.1002/sim.6139}.

\bibitem[Tartakovsky et~al.(2014)Tartakovsky, Nikiforov, and
  Basseville]{Tartakovsky2014-yq}
Alexander Tartakovsky, Igor Nikiforov, and Michele Basseville.
\newblock \emph{Sequential Analysis: Hypothesis Testing and Changepoint
  Detection}.
\newblock CRC Press, August 2014.
\newblock URL
  \url{https://play.google.com/store/books/details?id=zhsbBAAAQBAJ}.

\bibitem[{U.S. Food and Drug Administration} and {Health
  Canada}(2021)]{US_Food_and_Drug_Administration2021-tz}
{U.S. Food and Drug Administration} and {Health Canada}.
\newblock Good machine learning practice for medical device development,
  October 2021.
\newblock URL
  \url{https://www.fda.gov/medical-devices/software-medical-device-samd/good-machine-learning-practice-medical-device-development-guiding-principles}.

\bibitem[Van~Calster et~al.(2016)Van~Calster, Nieboer, Vergouwe, De~Cock,
  Pencina, and Steyerberg]{Van_Calster2016-ey}
Ben Van~Calster, Daan Nieboer, Yvonne Vergouwe, Bavo De~Cock, Michael~J
  Pencina, and Ewout~W Steyerberg.
\newblock A calibration hierarchy for risk models was defined: from utopia to
  empirical data.
\newblock \emph{J. Clin. Epidemiol.}, 74:\penalty0 167--176, June 2016.
\newblock URL \url{http://dx.doi.org/10.1016/j.jclinepi.2015.12.005}.

\bibitem[VanderWeele and Robins(2007)]{VanderWeele2007-ib}
Tyler~J VanderWeele and James~M Robins.
\newblock Directed acyclic graphs, sufficient causes, and the properties of
  conditioning on a common effect.
\newblock \emph{Am. J. Epidemiol.}, 166\penalty0 (9):\penalty0 1096--1104,
  November 2007.
\newblock URL \url{http://dx.doi.org/10.1093/aje/kwm179}.

\bibitem[VanderWeele and Robins(2009)]{VanderWeele2009-pa}
Tyler~J VanderWeele and James~M Robins.
\newblock Minimal sufficient causation and directed acyclic graphs.
\newblock \emph{Annals of Statistics}, 37\penalty0 (3):\penalty0 1437--1465,
  June 2009.
\newblock URL
  \url{https://projecteuclid.org/journals/annals-of-statistics/volume-37/issue-3/Minimal-sufficient-causation-and-directed-acyclic-graphs/10.1214/08-AOS613.full}.

\bibitem[Waudby-Smith et~al.(2021)Waudby-Smith, Arbour, Sinha, Kennedy, and
  Ramdas]{Waudby-Smith2021-rz}
Ian Waudby-Smith, David Arbour, Ritwik Sinha, Edward~H Kennedy, and Aaditya
  Ramdas.
\newblock Doubly robust confidence sequences for sequential causal inference.
\newblock March 2021.
\newblock URL \url{http://arxiv.org/abs/2103.06476}.

\bibitem[West(1986)]{West1986-jt}
Mike West.
\newblock Bayesian model monitoring.
\newblock \emph{J. R. Stat. Soc.}, 48\penalty0 (1):\penalty0 70--78, September
  1986.
\newblock URL
  \url{https://onlinelibrary.wiley.com/doi/10.1111/j.2517-6161.1986.tb01391.x}.

\bibitem[Wu and Zhou(2018)]{Wu2018-jk}
Weichi Wu and Zhou Zhou.
\newblock {Gradient-based} {structural} {change} {detection} {for}
  {nonstationary} {time} {series} {M-estimation}.
\newblock \emph{Ann. Stat.}, 46\penalty0 (3):\penalty0 1197--1224, 2018.
\newblock URL \url{https://www.jstor.org/stable/26542822}.

\bibitem[Xue et~al.(2021)Xue, Yan, and Schifano]{Xue2021-js}
Yishu Xue, Jun Yan, and Elizabeth~D Schifano.
\newblock Simultaneous monitoring for regression coefficients and baseline
  hazard profile in cox modeling of time-to-event data.
\newblock \emph{Biostatistics}, 22\penalty0 (4):\penalty0 756--771, October
  2021.
\newblock URL \url{http://dx.doi.org/10.1093/biostatistics/kxz064}.

\bibitem[Zeileis and Hornik(2007)]{Zeileis2007-la}
Achim Zeileis and Kurt Hornik.
\newblock Generalized m-fluctuation tests for parameter instability.
\newblock \emph{Stat. Neerl.}, 61\penalty0 (4):\penalty0 488--508, November
  2007.
\newblock URL
  \url{https://onlinelibrary.wiley.com/doi/10.1111/j.1467-9574.2007.00371.x}.

\bibitem[Zhang et~al.(2017)Zhang, Loda, and Woodall]{Zhang2017-iw}
Xiang Zhang, Justin~B Loda, and William~H Woodall.
\newblock Dynamic probability control limits for risk-adjusted {CUSUM} charts
  based on multiresponses.
\newblock \emph{Stat. Med.}, 36\penalty0 (16):\penalty0 2547--2558, July 2017.
\newblock URL \url{http://dx.doi.org/10.1002/sim.7312}.

\bibitem[Zhou(1994)]{Zhou1994-ye}
X~H Zhou.
\newblock Effect of verification bias on positive and negative predictive
  values.
\newblock \emph{Stat. Med.}, 13\penalty0 (17):\penalty0 1737--1745, September
  1994.
\newblock URL \url{http://dx.doi.org/10.1002/sim.4780131705}.

\end{thebibliography}

\pagebreak

\appendix

\begin{table}
	\caption{Mathematical symbols}
	\label{tab:symbols}
	\centering
	\begin{tabular}{c|p{3.5in}}
		Symbol & Meaning \\
		\toprule
		$X_t$ & Patient covariates that go into the ML algorithm\\
		$\tilde{X}_t$ & Additional patient covariates used to make treatment decisions\\
		$Y_t$ & Patient outcome\\
		$A_t$ & Treatment assignment \\
		$\tau_1, \tau_2, \cdots$ & Indices for the subsequence of times at which the patient was assigned standard-of-care (SOC) \\
		$Z_t$ & Predictors in the standard monitoring setting\\
		$\hat{f}_t$ & The ML algorithm at time $t$\\
		$\kappa^{\rel} \in (1,K)$ & Position of changepoint in relative time\\
		$\kappa$ & Changepoint in absolute time, equal to $\lfloor m \kappa^{\rel} \rfloor $\\
		$\theta$ & Parameter indexing the pre-change distribution\\
		$\delta$ & Parameter indexing the structural change\\
		$m$ & Size of dataset needed for initialization of monitoring procedures (also known as non-contaminated data)\\
		$m + 1, \cdots, mK$ & The time period for monitoring structural change\\
		$C_m(t)$ & The chart statistic of a monitoring procedure at time $t$\\
		$h_m(t)$ & The control limit of a monitoring procedure at time $t$\\
		$\hat{T}_m$ & Alarm time of a monitoring procedure, i.e. when the chart statistic first exceeds the control limit
	\end{tabular}
\end{table}

\section{Example satisfying the time-constant selection bias assumption}
\begin{example}
	Consider the bottom single world intervention graph (SWIG) in Figure~\ref{fig:dags}, where $U_t$ is an unmeasured confounder.
	By the rules of D-separation, we have that \eqref{eq:indpt} holds.
	Suppose the distribution $(U_t, A_t')$ is constant over time; as such, we will drop the time indices when denoting their marginal and conditional distributions.
	For the conditional risk model, assume there is no-additive interaction with respect to time and $U_t$, i.e.
	$$
	\E\left[Y_t(0)\mid X_t = x, U_t = u\right] = g_0(x, u; \theta_0) + g_1(x; \delta_1)\mathbbm{1}\{t > \kappa \}.
	$$
	
	Because $A_{t} = 0$ implies that $A_t' = 0$, we have $U_t\perp X_t \mid f_t(X_t)=q, A_t = 0$.
	Then for all times $t$ and $q \in \mathcal{Q}$, we have that
	\begin{align}
	& \E\left[Y_{t}(0)\mid\hat{f}_t(X_t)=q\right]-\E\left[Y_{t}(0)\mid\hat{f}_t(X_t)=q,A_t=0\right]\\
	= & \int
	\left(g_0(x_t, u; \theta_0) + g_1(x_t; \delta_1)\mathbbm{1}\{t>\kappa\} \right)
	p(x_t \mid \hat{f}_t(x_t) = q)
	\left[p(u) - p(u \mid a' = 0) \right] dx_t du \\
	= & \int
	g_0(x_t, u; \theta_0)
	p(x_t \mid \hat{f}_t(x_t) = q)
	\left[p(u) - p(u \mid a' = 0) \right] dx_t du.
	\label{eq:ex_conf}
	\end{align}
	There are various conditions under which \eqref{eq:ex_conf} is time-constant.
	One requirement is that $g_0(x,u; \theta_0)$ is additive, in that $g_0(x,u; \theta_0) = g_{0,0}(x; \theta_0) +  g_{0,1}(u; \theta_0)$.
	Alternatively, we require the ML algorithm to be locked ($\hat{f}_t = \hat{f}$ for all $t$) and the distribution of $X_t$ to not vary over time.
	
	\label{example:equiconf}
\end{example}

\section{Proofs for the score-based CUSUM}
Let $\mathcal{Z}$ and $\mathcal{Y}$ be the domains for the predictors and outcomes. 
Let $(\theta_{0},\delta_{0})$ and $(\theta_{0},\delta_{1})$ parameterize the pre-change and post-change distribution, where $\delta_{0} = 0$ and $\delta_1 \ne 0$.
We assume that $p(y|z; \theta, \delta)$ is 3-times continuously differentiable with respect to $(\theta, \delta)$.
For convenience, denote
\begin{align*}
\Lambda_{m}(i) & = \mathbb{E}\left[-\sum_{j=1}^{i}
\nabla_{\theta}^{2}\log p\left(Y_{j}\mid Z_{j};\theta_{0},\delta_0\right)
\right]\\
V_m(i) & = \mathbb{E}\left[\nabla_{\theta}\nabla_{\delta}\log p\left(Y_{i}\mid Z_{i};\theta_0,\delta_0\right)
\middle |
Z_{i} \right].
\end{align*}
For $v\in[1,K]$, define the limit of the MLEs as $\bar{\hat{\theta}}(v) = \lim_{m \rightarrow \infty } \hat{\theta}_{m,\lfloor mv \rfloor }(v)$.

We use the symbol $\Rightarrow$ to mean weak convergence in the space under consideration.
Throughout, we will use $c$ (sometimes with subscripts) to denote constants, which may vary across contexts.
When we write $Z_m \le_{p} c$, this means that asymptotically as $m\rightarrow \infty$, the random variable $Z_m$ is bounded by some constant $c$ with probability 1.

\subsection{Asymptotics under the null}
\label{sec:null_asym}
Here we prove asymptotic convergence of the chart statistic under the null.
In addition to the assumptions listed in the main manuscript, we will require the second and third derivatives of the likelihood to be bounded as follows.
\begin{assumption}
	There exist constants $c_1, c_2>0$ and some neighborhood $B(\theta_{0},r)$
	centered at $\theta_{0}$ with radius $r>0$ such that 
	\begin{align}
	\sup_{\tilde{\theta}\in B(\theta_{0},r)}\sup_{y\in\mathcal{Y},z\in\mathcal{Z}}\left\Vert \nabla_{\theta}^{2}\nabla_{\delta}\log p\left(y\mid z;\tilde{\theta},\delta_{0}\right)\right\Vert _{\infty} & \le c_1
	\label{eq:bounded_deriv3}\\
	\sup_{y\in\mathcal{Y},z\in\mathcal{Z}}\left\Vert \nabla_{\theta}\nabla_{\delta}\log p\left(y\mid z;\theta_{0},\delta_{0}\right)\right\Vert _{\infty} & \le_{p}c_2.\label{eq:bounded_deriv2}
	\end{align}
	\label{assum:bounded}
\end{assumption}

\begin{lemma}
	Suppose Assumptions~\ref{assume:gauss}, \ref{assume:remain}, \ref{assume:matrices}, and \ref{assum:bounded} hold.
	Define 
	\[
	\tilde{\phi}_{m}(t_{1},t_{2})=
	\sum_{i=t_{1}}^{t_{2}}\nabla_{\delta}\log p\left(Y_{i}\mid Z_{i};\theta_{0},\delta_{0}\right)
	+\sum_{i=t_{1}}^{t_{2}}V_{m}(i)\Lambda_{m}^{-1}(i-1)\sum_{j=1}^{i-1}\nabla_{\theta}\log p\left(Y_{j}\mid Z_{j};\theta_{0},\delta_{0}\right).
	\]
	Under the null, we have
	\[
	\max_{m<t_{1},t_{2}\le mK}
	\frac{1}{\sqrt{m}}
	\left\Vert \psi_{m}^{(\plugin)}(t_{1},t_{2})-\tilde{\phi}_{m}(t_{1},t_{2})\right\Vert_2 =o_{p}(1).
	\]
	\label{lemma:remain}
\end{lemma}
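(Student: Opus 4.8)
The plan is to linearize the nonanticipative plug-in score around the true parameter $\theta_0$ and argue that the linearization error is uniformly negligible at scale $\sqrt{m}$. Write $\Delta_i = \hat\theta_{m,i-1}-\theta_0$, $g_i = \nabla_\theta\nabla_\delta\log p(Y_i\mid Z_i;\theta_0,\delta_0)$, and $S_{i-1} = \sum_{j=1}^{i-1}\nabla_\theta\log p(Y_j\mid Z_j;\theta_0,\delta_0)$. An exact first-order Taylor expansion of $\theta\mapsto\nabla_\delta\log p(Y_i\mid Z_i;\theta,\delta_0)$ about $\theta_0$ expresses $\psi_m^{(\plugin)}(t_1,t_2)$ as its first summand plus $\sum_{i=t_1}^{t_2}\big(\int_0^1\nabla_\theta\nabla_\delta\log p(Y_i\mid Z_i;\theta_0+s\Delta_i,\delta_0)\,ds\big)\Delta_i$. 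Subtracting $\tilde\phi_m$ and inserting $g_i\Delta_i$ and $g_i\Lambda_m^{-1}(i-1)S_{i-1}$ yields the decomposition $\psi_m^{(\plugin)}(t_1,t_2)-\tilde\phi_m(t_1,t_2) = (\mathrm I)+(\mathrm{II})+(\mathrm{III})$, with $(\mathrm I) = \sum_{i=t_1}^{t_2}(\int_0^1\nabla_\theta\nabla_\delta\log p(Y_i\mid Z_i;\theta_0+s\Delta_i,\delta_0)\,ds - g_i)\Delta_i$, $(\mathrm{II}) = \sum_{i=t_1}^{t_2}g_i(\Delta_i-\Lambda_m^{-1}(i-1)S_{i-1})$, and $(\mathrm{III}) = \sum_{i=t_1}^{t_2}(g_i-V_m(i))\Lambda_m^{-1}(i-1)S_{i-1}$.

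First I would establish the uniform rate $\max_{m<i\le mK}\|\Delta_i\| = O_p(1/\sqrt m)$. Assumption~\ref{assume:matrices} forces $\|m\Lambda_m^{-1}(i-1)\|$ to be bounded uniformly by a deterministic constant, since $\Lambda_0^{-1}$ is continuous on the compact interval $[1,K]$; Assumption~\ref{assume:gauss} gives $\max_i\|S_{i-1}\|/\sqrt m = O_p(1)$ through weak convergence of the partial-sum process; hence $\|\Lambda_m^{-1}(i-1)S_{i-1}\| = O_p(1/\sqrt m)$ uniformly. Combining this with the uniform linearization remainder of Assumption~\ref{assume:remain} (applied at index $i-1$) transfers the same rate to $\Delta_i$ and, in fact, gives $\max_i\|\Delta_i-\Lambda_m^{-1}(i-1)S_{i-1}\| = o_p(1/\sqrt m)$.

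The terms $(\mathrm I)$ and $(\mathrm{II})$ are then routine deterministic bounds. For $(\mathrm I)$, the intermediate points $\theta_0+s\Delta_i$ lie in $B(\theta_0,r)$ with probability tending to one, so the third-derivative bound \eqref{eq:bounded_deriv3} gives $\|\int_0^1\nabla_\theta\nabla_\delta\log p(\theta_0+s\Delta_i)\,ds-g_i\|\le c_1\|\Delta_i\|$ and therefore $\|(\mathrm I)\|\le c_1\sum_i\|\Delta_i\|^2\le c_1(K-1)m\max_i\|\Delta_i\|^2 = O_p(1)$. For $(\mathrm{II})$, the second-derivative bound \eqref{eq:bounded_deriv2} gives $\|g_i\|\le_p c_2$, so $\|(\mathrm{II})\|\le c_2(K-1)m\max_i\|\Delta_i-\Lambda_m^{-1}(i-1)S_{i-1}\| = (K-1)m\cdot o_p(1/\sqrt m) = o_p(\sqrt m)$. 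Both bounds hold uniformly over $(t_1,t_2)$ because they depend only on the number of summands, which never exceeds $(K-1)m$; dividing by $\sqrt m$ renders each $o_p(1)$.

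The hard part will be term $(\mathrm{III})$, which I would handle via its martingale structure rather than a crude bound. Setting $\mathcal F_i = \sigma(Z_1,Y_1,\dots,Z_i,Y_i)$, the weight $w_{i-1} = \Lambda_m^{-1}(i-1)S_{i-1}$ is $\mathcal F_{i-1}$-measurable, and the conditional independence of $Y_i$ from the past given $Z_i$ in the factorization \eqref{eq:monitoring_base} yields $\E[g_i\mid Z_i,\mathcal F_{i-1}] = \E[g_i\mid Z_i] = V_m(i)$; hence $\xi_i := (g_i-V_m(i))w_{i-1}$ is a martingale difference sequence and $M_t = \sum_{i=m+1}^t\xi_i$ is a martingale. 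Using $\|g_i-V_m(i)\|\le_p 2c_2$ from \eqref{eq:bounded_deriv2} together with $\max_i\|w_{i-1}\| = O_p(1/\sqrt m)$, the predictable quadratic variation obeys $\sum_{i=m+1}^{mK}\E[\|\xi_i\|^2\mid\mathcal F_{i-1}]\le(2c_2)^2\sum_i\|w_{i-1}\|^2 = O_p(m)\cdot O_p(1/m) = O_p(1)$. A martingale maximal inequality (Doob's inequality, or Lenglart's domination inequality to pass from the $O_p(1)$ bound on the quadratic variation) then gives $\max_{m<t\le mK}\|M_t\| = O_p(1)$, and writing $\sum_{i=t_1}^{t_2}\xi_i = M_{t_2}-M_{t_1-1}$ shows the doubly-indexed supremum of $(\mathrm{III})$ is also $O_p(1) = o_p(\sqrt m)$. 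Collecting the three bounds and dividing by $\sqrt m$ gives the claimed $o_p(1)$; the delicacy throughout is maintaining uniformity over the double index $(t_1,t_2)$, which the reduction to a single partial-sum process and the count-based bounds of the first two terms secure.
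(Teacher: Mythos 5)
Your proposal is correct, and its skeleton matches the paper's: Taylor-expand the nonanticipative score around $\theta_0$, split the difference $\psi_m^{(\plugin)}-\tilde\phi_m$ into a quadratic Taylor remainder, an asymptotic-linearity remainder, and a centered cross term, then kill the first two with the uniform rate $\max_i\|\hat\theta_{m,i-1}-\theta_0\|=O_p(m^{-1/2})$ and the third with a martingale argument. The differences are in the bookkeeping and in how the martingale maximum is controlled. The paper's decomposition puts $V_m(i)$ against the asymptotic-linearity remainder and takes $(g_i-V_m(i))(\hat\theta_{m,i-1}-\theta_0)$ as the martingale difference, whereas you put $g_i$ against that remainder and take $(g_i-V_m(i))\Lambda_m^{-1}(i-1)S_{i-1}$; both multipliers are $\mathcal F_{i-1}$-measurable, so both give martingale difference sequences, and the two splits are algebraically equivalent. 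For the maximal bound, the paper truncates $\|\hat\theta_{m,i-1}-\theta_0\|$ at $c\log m/\sqrt m$, exploits conditional sub-Gaussianity of $g_i-V_m(i)$, applies a Chernoff bound, and union-bounds over all $O(m^2)$ pairs $(t_1,t_2)$, obtaining $o_p(\sqrt m)$. You instead bound the predictable quadratic variation by $O_p(1)$ using $\max_i\|w_{i-1}\|=O_p(m^{-1/2})$ and invoke Lenglart's domination inequality (correctly preferred over plain Doob, since the compensator bound is only in probability), reducing the double index to $M_{t_2}-M_{t_1-1}$; this is more elementary, avoids the $m^2$ union bound and the truncation device, and yields the sharper $O_p(1)$. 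The only implicit regularity you lean on — boundedness of $\Lambda_0^{-1}$ on $[1,K]$ and a.s. finiteness of $\sup_\nu\|U_\theta(\nu)\|$ — is also used tacitly in the paper, so nothing is lost.
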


\begin{proof}
	Consider the decomposition 
	\begin{align*}
	\frac{1}{\sqrt{m}}
	\left(\psi_{m}^{(\plugin)}(t_{1},t_{2})-\tilde{\phi}_{m}(t_{1},t_{2}) \right)=
	& R_{m}^{(1)}(t_{1},t_{2})+R_{m}^{(2)}(t_{1},t_{2})+R_{m}^{(3)}(t_{1},t_{2})
	\end{align*}
	where
	\begin{align*}
	R_{m}^{(1)}(t_{1},t_{2}) & =\frac{1}{\sqrt{m}}\sum_{i=t_{1}}^{t_{2}}\nabla_{\delta}\log p\left(Y_{i}\mid Z_{i};\hat{\theta}_{m,i-1},\delta_0\right)-\nabla_{\delta}\log p\left(Y_{i}\mid Z_{i};\theta_{0},\delta_{0}\right)-\nabla_{\theta}\nabla_{\delta}\log p\left(Y_{i}\mid Z_{i};\theta_0,\delta_0\right) \left(\hat{\theta}_{m,i-1}-\theta_{0}\right)\\
	R_{m}^{(2)}(t_{1},t_{2}) & =\frac{1}{\sqrt{m}}\sum_{i=t_{1}}^{t_{2}}V_{m}(i)\left[\hat{\theta}_{m,i-1}-\theta_{0}-\Lambda_{m}^{-1}(i-1)\sum_{j=1}^{i-1}\nabla_{\theta}\log p\left(Y_{j}\mid Z_{j};\theta_{0},\delta_{0}\right)\right]\\
	R_{m}^{(3)}(t_{1},t_{2}) & =\frac{1}{\sqrt{m}}\sum_{i=t_{1}}^{t_{2}}\left[\nabla_{\theta}\nabla_{\delta}\log p\left(Y_{i}\mid Z_{i};\theta_{0},\delta_{0}\right)-V_{m}(i)\right]\left(\hat{\theta}_{m,i-1}-\theta_{0}\right).
	\end{align*}
	We will prove that each term in this decomposition is negligible,
	i.e. 
	\begin{equation}
	\max_{m<t_{1}<t_{2}\le mK}
	\left \|R_{m}^{(j)}(t_{1},t_{2})\right \|_2=o_{p}(1)
	\quad\forall j=1,2,3.
	\label{eq:remainder_decomp}
	\end{equation}
	
	\paragraph{First remainder term.} For any $\epsilon>0$, we have
	that
	\begin{align}
	\begin{split}
	\Pr\left(
	\max_{m<t_{1}<t_{2}\le mK}
	\left \|R_{m}^{(1)}(t_{1},t_{2}) \right \|_2 > \epsilon\right) \le
	& \Pr\left(\max_{m<i\le mK}
	\left\Vert \hat{\theta}_{m,i}-\theta_{0}\right\Vert _{2}>c\frac{\log m}{\sqrt{m}}\right) \\
	& +\Pr\left(\max_{m<t_{1}<t_{2}\le mK}
	\left \|R_{m}^{(1)}(t_{1},t_{2})\right \|_2>\epsilon,
	\max_{m<i\le mK}\left\Vert \hat{\theta}_{m,i}-\theta_{0}\right\Vert _{2}\le c\frac{\log m}{\sqrt{m}}\right).
	\label{eq:r1}
	\end{split}
	\end{align}
	The first summand on the RHS of \eqref{eq:r1} goes to zero because Assumptions \ref{assume:gauss}, \ref{assume:remain}, and \ref{assume:matrices} imply 
	\begin{equation}
	\max_{m<i\le mK}\left\Vert \hat{\theta}_{m,i}-\theta_{0}\right\Vert _{2}=o_{p}\left(\frac{\log m}{\sqrt{m}}\right).
	\label{eq:mle_prob}
	\end{equation}
	To bound the second summand, we have by Taylor's theorem and
	Assumption~\ref{assum:bounded} that for sufficiently large
	$m$
	\begin{align*}
	\left\Vert R_{m}^{(1)}(t_{1},t_{2})\right\Vert_2  & \le\frac{1}{2\sqrt{m}}\sum_{i=t_{1}}^{t_{2}}\left(\max_{\tilde{\theta}\in B(\theta_{0},r)}\max_{z\in\mathcal{Z},y\in\mathcal{Y}}\left\Vert \nabla_{\theta}^{2}\nabla_{\delta}\log p\left(y\mid z;\tilde{\theta},\delta_{0}\right)\right\Vert _{\infty}\right)\left\Vert \hat{\theta}_{m,i-1}-\theta_{0}\right\Vert _{2}^{2}\\
	& \le\frac{c_1}{2\sqrt{m}}\sum_{i=t_{1}}^{t_{2}}\left\Vert \hat{\theta}_{m,i-1}-\theta_{0}\right\Vert _{2}^{2}\\
	& \le \frac{c_1}{2}\sqrt{m}\left(K-1\right)\max_{m<i\le mK}\left\Vert \hat{\theta}_{m,i}-\theta_{0}\right\Vert _{2}^{2}\\
	& =o_{p}\left(\left(K-1\right)\left(\log m\right)^{2}/\sqrt{m}\right)
	\end{align*}
	for all $(t_{1},t_{2})$ where $m<t_{1}\le t_{2}\le mK$. So (\ref{eq:remainder_decomp})
	holds for $j=1$.
	
	\paragraph{Second remainder term.}
	By Assumption~\ref{assum:bounded} and the Cauchy-Schwarz inequality, we have that
	\begin{align*}
	\left\Vert R_{m}^{(2)}(t_{1},t_{2})\right\Vert_2  & \le\frac{c_2}{\sqrt{m}}\sum_{i=t_{1}}^{t_{2}}\max_{m<i\le mK}\left\Vert \hat{\theta}_{m,i-1}-\theta_{0}-\Lambda_{m}^{-1}(i-1)\sum_{j=1}^{i-1}\nabla_{\theta}\log p\left(Y_{j}\mid Z_{j};\theta_{0},\delta_{0}\right)\right\Vert _{2}\\
	& \le c\left(K-1\right)\sqrt{m}\max_{m<i\le mK}\left\Vert \hat{\theta}_{m,i}-\theta_{0}-\Lambda_{m}^{-1}(i)\sum_{j=1}^{i}\nabla_{\theta}\log p\left(Y_{j}\mid Z_{j};\theta_{0},\delta_{0}\right)\right\Vert _{2}.
	\end{align*}
	Then by Assumption~\ref{assume:remain}, this term is $o_{p}\left(1\right)$.
	So (\ref{eq:remainder_decomp}) holds for $j=2$.
	
	\paragraph{Third remainder term.} For any $\epsilon>0$, we have that 
	\begin{align}
	\begin{split}
	& \Pr\left(\max_{t_{1},t_{2}}
	\left \|
	\sum_{i=t_{1}}^{t_{2}}\left(\nabla_{\theta}\nabla_{\delta}\log p\left(Y_{i}\mid Z_{i};\theta_{0},\delta_{0}\right)-V_{m}(i)\right)\left(\hat{\theta}_{m,i-1}-\theta_{0}\right)
	\right \|_2
	\ge\epsilon\sqrt{m}\right)\\
	\le & \Pr\left(\max_{m<i\le mK}\left\Vert \hat{\theta}_{m,i}-\theta_{0}\right\Vert _{2}>c\frac{\log m}{\sqrt{m}}\right)\\
	& +\Pr\left(
	\max_{t_{1},t_{2}}
	\left \|
	\sum_{i=t_{1}}^{t_{2}}\left(\nabla_{\theta}\nabla_{\delta}\log p\left(Y_{i}\mid Z_{i};\theta_{0},\delta_{0}\right)-V_{m}(i)\right)\left(\hat{\theta}_{m,i-1}-\theta_{0}\right)
	\right \|_2
	\ge\epsilon\sqrt{m},\max_{m<i\le mK}\left\Vert \hat{\theta}_{m,i}-\theta_{0}\right\Vert _{2}\le c\frac{\log m}{\sqrt{m}}\right).
	\label{eq:decomp3}
	\end{split}
	\end{align}
	Per \eqref{eq:mle_prob}, the first summand on the RHS of \eqref{eq:decomp3} goes to zero.
	The second summand is bounded by 
	\begin{equation}
	\Pr\left(
	\max_{t_{1},t_{2}}
	\left \|
	\sum_{i=t_{1}}^{t_{2}}\left(\nabla_{\theta}\nabla_{\delta}\log p\left(Y_{i}\mid Z_{i};\theta_{0},\delta_{0}\right)-V_{m}(i)\right)\left(\hat{\theta}_{m,i-1}-\theta_{0}\right)\mathbbm{1}\left\{ \|\hat{\theta}_{m,i-1}-\theta_{0}\|_2 \le c\frac{\log m}{\sqrt{m}}\right\}
	\right \|_2
	\ge\epsilon\sqrt{m}\right).
	\label{eq:martingale_pr}
	\end{equation}
	Because the outcome $Y_{i}$ is conditionally independent of past
	data given $Z_{i}$, the elements in this summation form a martingale
	difference sequence, i.e. 
	\[
	\E\left[G_{m}(i)\left(\hat{\theta}_{m,i-1}-\theta_{0}\right)\mathbbm{1}\left\{ \left \|\hat{\theta}_{m,i-1}-\theta_{0}\right \|_2 \le c\frac{\log m}{\sqrt{m}}\right\} 
	\middle \vert
	\mathcal{F}_{i}\right]=0
	\]
	where we use the notational shorthand
	\[
	G_{m}(i)=\nabla_{\theta}\nabla_{\delta}\log p\left(Y_{i}\mid Z_{i};\theta_{0},\delta_{0}\right)-V_{m}(i)
	\]
	and
	
	\[
	\mathcal{F}_{i}=\left(Z_{1},Y_{1},\cdots,Z_{i-1},Y_{i-1},Z_{i}\right).
	\]
	Moreover, by Assumption~\ref{assum:bounded}, $G_{m}(i)$ is sub-Gaussian.
	That is, there is some $\sigma^{2}>0$ such that for all $\lambda>0$,
	we have for all unit vectors $u,v$ that
	\begin{align*}
	\E\left[\exp\left(\lambda v^{\top}G_{m}(i)u\right)
	\middle |
	\mathcal{F}_{i-1}\right] & \le \E\left[\exp\left(\lambda^{2}\sigma^{2}\lambda_{\max}\left(G_{m}(i)\right)\right)
	\middle |
	\mathcal{F}_{i-1}\right].
	\end{align*}
	By the law of total expectations, we then have for any unit vector
	$v$ that
	\begin{align*}
	& \E\left[\exp\left(\lambda v^{\top}\sum_{i=t_{1}}^{t_{2}}G_{m}(i)\left(\hat{\theta}_{m,i-1}-\theta_{0}\right)
	\mathbbm{1}\left\{ \left \|\hat{\theta}_{m,i-1}-\theta_{0}\right \|_2 \le c\frac{\log m}{\sqrt{m}}\right\}
	\right)\right]\\
	= & \E\Biggr[
	\E\left[\exp\left(\lambda v^{\top}G_{m}(t_{2})\left(\hat{\theta}_{m,t_{2}-1}-\theta_{0}\right)
	\mathbbm{1}\left\{ \left \|\hat{\theta}_{m,t_2-1}-\theta_{0}\right \|_2 \le c\frac{\log m}{\sqrt{m}}\right\}
	\right)\middle | 
	\mathcal{F}_{t_{2}-1}\right]
	\\
	& \qquad \times \exp\left(\lambda v^{\top}\sum_{i=t_{1}}^{t_{2}-1}G_{m}(i)\left(\hat{\theta}_{m,i-1}-\theta_{0}\right)
	\mathbbm{1}\left\{ \left\|\hat{\theta}_{m,i-1}-\theta_{0}\right \|_2 \le c\frac{\log m}{\sqrt{m}}\right\} 
	\right)
	\Biggr]
	\\
	\le & \exp\left(\lambda^{2}c^{2}\sigma^{2}\frac{\left(\log m\right)^{2}}{m}\right)
	\E\left[\exp\left(\lambda v^{\top}\sum_{i=t_{1}}^{t_{2}-1}G_{m}(i)\left(\hat{\theta}_{m,i-1}-\theta_{0}\right)
	\mathbbm{1}\left\{ \left\|\hat{\theta}_{m,i-1}-\theta_{0}\right \|_2 \le c\frac{\log m}{\sqrt{m}}\right\}
	\right)\right]\\
	\le & \exp\left(\lambda^{2}c^{2}\sigma^{2}\left(K-1\right)\left(\log m\right)^{2}\right).
	\end{align*}
	Using the Chernoff bound, we have that (\ref{eq:martingale_pr}) is
	bounded by 
	\begin{align*}
	& 
	\sum_{m<t_{1}\le t_{2}\le mK}\Pr\left(
	\left \|
	\sum_{i=t_{1}}^{t_{2}}
	\left(\nabla_{\theta}\nabla_{\delta}\log p\left(Y_{i}\mid Z_{i};\theta_{0},\delta_{0}\right)-V_{m}(i)\right)\left(\hat{\theta}_{m,i-1}-\theta_{0}\right)\mathbbm{1}\left\{ \left\|\hat{\theta}_{m,i-1}-\theta_{0}\right \|_2 \le c\frac{\log m}{\sqrt{m}}\right\} 
	\right \|_2
	\ge\epsilon\sqrt{m}\right)\\
	\le & m^{2}\left(K-1\right)^{2}\exp\left(\lambda^{2}c^{2}\sigma^{2}\left(K-1\right)\left(\log m\right)^{2}-\epsilon^{2}m\right).
	\end{align*}
	This converges to zero as $m\rightarrow\infty$, so (\ref{eq:remainder_decomp})
	holds for $j=3$.
\end{proof}

Using Lemma~\ref{lemma:remain} above, we are now ready to prove Theorem~\ref{thrm:score_conv}.

\begin{proof}[Proof of Theorem~\ref{thrm:score_conv}]
	Consider the decomposition
	\begin{align}
	\tilde{\phi}_{m}(t_{1},t_{2}) &  =
	\phi_{m}(t_{1},t_{2}) +R_{m}^{(1)}(t_{1},t_{2})+R_{m}^{(2)}(t_{1},t_{2})
	\label{eq:decomp_phi}
	\end{align}
	with remainder terms defined as
	\begin{align*}
	R_{m}^{(1)}(t_{1},t_{2})= & \frac{1}{\sqrt{m}}\sum_{i=t_{1}}^{t_{2}}\left(V_{m}(i)
	-\bar{V}_{0}\left(i/m\right)\right)\Lambda_{m}^{-1}(i-1)\sum_{j=1}^{i-1}\nabla_{\theta}\log p\left(Y_{j}\mid Z_{j};\theta_{0}, \delta_0\right)\\
	R_{m}^{(2)}(t_{1},t_{2})= & \frac{1}{\sqrt{m}}\sum_{i=t_{1}}^{t_{2}}\bar{V}_{0}\left(i/m\right)\left(\Lambda_{m}^{-1}(i-1)-\frac{1}{m}\Lambda_{0}^{-1}\left(\frac{i-1}{m}\right)\right)\sum_{j=1}^{i-1}\nabla_{\theta}\log p\left(Y_{j}\mid Z_{j};\theta_{0}, \delta_0\right).
	\end{align*}
	We will first show the remainder terms are negligible, i.e.
	
	\begin{equation}
	\max_{m<t_{1}<t_{2}\le mK}\|R_{m}^{(j)}(t_{1},t_{2})\|=o_{p}(1)\quad\forall j=1,2.\label{eq:remainder_decomp2}
	\end{equation}
	
	\paragraph{First remainder term.} To bound the first remainder,
	note that the partial sums form a martingale due to Assumption~\ref{assume:matrices},
	i.e.
	\[
	\E\left[
	\left(V_{m}(i)-\bar{V}_{0}\left(i/m\right)\right)\Lambda_{m}^{-1}(i-1)\sum_{j=1}^{i-1}\nabla_{\theta}\log p\left(Y_{j}\mid Z_{j};\theta_{0}, \delta_0 \right)
	\middle | 
	\mathcal{F}_{i}
	\right]
	=0.
	\]
	Moreover,
	$\left(V_{m}(i)-\bar{V}_{0}(i/m)\right)\Lambda_{m}^{-1}(i)$ is sub-Gaussian
	and 
	\begin{align*}
	\max_{m<i\le mK}\left\Vert \sum_{j=1}^{i-1}\nabla_{\theta}\log p\left(Y_{j}\mid Z_{j};\theta_{0}, \delta_0\right)\right\Vert _{2} & =o_{p}\left(\frac{\log m}{\sqrt{m}}\right)
	\end{align*}
	per Assumptions~\ref{assume:gauss} and \ref{assume:matrices}.
	As such, we can use a similar martingale argument as the previous lemma to prove that \eqref{eq:remainder_decomp2} is satisfied for $j = 1$.

	\paragraph{Second remainder term.} By the Cauchy-Schwarz inequality,
	we have that
	\begin{align*}
	& \frac{1}{\sqrt{m}}\sum_{i=t_{1}}^{t_{2}}\bar{V}_{0}\left(i/m\right)\left(\Lambda_{m}^{-1}(i-1)-\frac{1}{m}\Lambda_{0}^{-1}\left(\frac{i-1}{m}\right)\right)\sum_{j=1}^{i-1}\nabla_{\theta}\log p\left(Y_{j}\mid Z_{j};\theta_{0}, \delta_0\right)\\
	& \le c\sum_{i=t_{1}}^{t_{2}}\left\Vert \Lambda_{m}^{-1}(i)-\frac{1}{m}\Lambda_{0}^{-1}(i/m)\right\Vert _{2}\left\Vert \frac{1}{\sqrt{m}}\sum_{j=1}^{i-1}\nabla_{\theta}\log p\left(Y_{j}\mid Z_{j};\theta_{0}, \delta_0\right)\right\Vert _{2}\\
	& \le c\left(K-1\right)\left(\max_{i=m+1,\cdots,mK}\left\Vert m\Lambda_{m}^{-1}(i)-\Lambda_{0}^{-1}(i/m)\right\Vert _{2}\right)\left(\max_{m<i\le mK}\left\Vert \frac{1}{\sqrt{m}}\sum_{j=1}^{i-1}\nabla_{\theta}\log p\left(Y_{j}\mid Z_{j};\theta_{0}, \delta_0\right)\right\Vert _{2}\right)
	\end{align*}
	By Assumptions~\ref{assume:gauss} and \ref{assume:matrices}, it follows that \eqref{eq:remainder_decomp2} for $j=2$.
	
	In addition, by Assumption~\ref{assume:gauss}, we have that
	\[
	\left\{
	\nu
	\mapsto
	\frac{1}{\sqrt{m}}\sum_{j=1}^{\lfloor m \nu \rfloor }
	\left(\begin{array}{c}
	\nabla_{\theta}\log p\left(Y_{j}\mid Z_{j};\theta_{0},\delta_0\right)\\
	V_{0}(\nu ) \Lambda_{0}^{-1}(\nu )
	\nabla_{\delta}\log p\left(Y_{j}\mid Z_{j};\theta_{0},\delta_0\right)
	\end{array}\right)
	\right\}
	\Rightarrow
	\left\{
	\nu
	\mapsto
	\left(\begin{array}{c}
	U_{\theta}(i/m)\\
	V_{0}(\nu ) \Lambda_{0}^{-1}(\nu ) U_{\delta}(i/m)
	\end{array}\right)
	\right\}.
	\]
	So by Slutsky's theorem and the continuous mapping theorem, we have weak convergence of the process $\phi_m$ with respect to the space of bounded functions $f:\Delta \mapsto \mathbb{R}^{d}$ as follows
	\begin{align}
	\left\{ 
	(\nu_{1},\nu_{2})
	\mapsto
	\phi_{m}(\nu_{1},\nu_{2})
	\right\}_{(\nu_1, \nu_2) \in \Delta}
	& \Rightarrow
	\left\{
	(\nu_{1},\nu_{2})
	\mapsto U_{\delta}(\nu_{2})-U_{\delta}(\nu_{1})+\int_{\nu_{1}}^{\nu_{2}}\bar{V}_{0}(v)\Lambda_{0}^{-1}(v)U_{\theta}(v)dv
	\right\}_{(\nu_1, \nu_2) \in \Delta}.
	\label{eq:phi_asymp}
	\end{align}
	Combining this result with Lemma~\ref{lemma:remain} and \eqref{eq:remainder_decomp2}, the process $\psi_m^{(\plugin)}$ converges weakly to the same limit as $\phi_m$.
\end{proof}

\subsection{Asymptotics under the alternative}

Suppose there is some $K'\in(\kappa,K]$ that satisfies the following assumptions.
Assumptions~\ref{assum:gauss_alt}, \ref{assum:consist_alt}, and \ref{assum:bounded_alt} can be viewed as analogous but simplified versions of the assumptions in Section~\ref{sec:null_asym}.
Assumption~\ref{assum:score_nonzero} assumes the cumulative score process is characterized by some non-zero drift under the alternative for some time period, even when we continually update the plugin estimator $\hat{\theta}_{m,i}$.
For example, this is likely to hold for values of $K'$ that are slightly larger than $\kappa$, since the plugin estimators up to time $\lfloor m K' \rfloor$ will not have strayed too far from its actual value of $\theta_0$ prior to the changepoint.

\begin{assumption}
	Under the alternative, suppose that 
	\begin{equation*}
	\frac{1}{\sqrt{m}}\sum_{j=\lfloor m\kappa\rfloor}^{\lfloor m K' \rfloor }
	\nabla_{\delta}\log p\left(Y_{j}\mid Z_{j};\bar{\hat{\theta}}\left(\frac{j-1}{m}\right),\delta_{0}\right)
	-\mathbb{E}_{\theta_{0},\delta_{1}\mathbbm{1}\left\{ i\ge m\kappa\right\} }\left[
	\nabla_{\delta}\log p\left(Y_{j}\mid Z_{j};\bar{\hat{\theta}}\left(\frac{j-1}{m}\right),\delta_{0}\right)
	\middle |
	Z_j
	\right]
	=O_{P}(1).
	\end{equation*}
	\label{assum:gauss_alt}
\end{assumption}

\begin{assumption}
	Under the alternative, suppose that 
	\begin{equation*}
	\max_{i=\lfloor m\kappa\rfloor,\cdots,\lfloor m K' \rfloor}\left\Vert \sqrt{m}\left(\hat{\theta}_{m,i}-\bar{\hat{\theta}}(i/m)\right)\right\Vert _{2}=O_{p}(1).\label{eq:mle_converge}
	\end{equation*}
	\label{assum:consist_alt}
\end{assumption}

\begin{assumption}
	There is some $c>0$ and neighborhood $B$ that includes the set $\left\{ \bar{\hat{\theta}}\left(v\right):v\in[\tau,K']\right\} $
	with nonzero radius such that 
	\begin{equation*}
	\sup_{\theta\in B}\sup_{y\in\mathcal{Y},z\in\mathcal{Z}}
	\left\Vert
	\nabla_{\theta}\nabla_{\delta}\log p\left(y\mid z;\theta,\delta_0\right)
	\right\Vert _{\infty}
	\le_{p}c.
	\end{equation*}
	\label{assum:bounded_alt}
\end{assumption}

\begin{assumption}
	There is some $c>0$ such that
	\begin{equation*}
	\lim_{m \rightarrow \infty}
	\left\Vert
	\frac{1}{m}
	\sum_{j = \lfloor m \kappa \rfloor }^{\lfloor m K' \rfloor }
	\mathbb{E}_{\theta_{0},\delta_{1}}\left[
	\nabla_{\delta}\log p\left(Y_j \mid Z_j;\bar{\hat{\theta}}(t),\delta_{0}\right)
	\right]
	\right\Vert_2
	\ge c.
	\end{equation*}
	\label{assum:score_nonzero}
\end{assumption}

\begin{theorem}
	Suppose Assumptions~\ref{assum:gauss_alt} to \ref{assum:score_nonzero} hold. Then under the alternative hypothesis, we have
	\[
	\lim_{m\rightarrow\infty}\Pr\left(\exists t \in \{m+1,\cdots,mK\} \text{ such that } C_{m}^{(\plugin)}(t)>h_{m}(t)\right)=1.
	\]
	\label{thrm:alt_consist}
\end{theorem}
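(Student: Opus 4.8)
The plan is to exhibit a single candidate changepoint and terminal time at which the chart statistic provably exceeds its control limit with probability tending to one. By definition of the running maximum, for any fixed $t' \in \{m+1,\ldots,t\}$ we have $C_m^{(\plugin)}(t) \ge \|\psi_m^{(\plugin)}(t', t)\|$. I would take $t' = \kappa = \lfloor m\kappa^{\rel}\rfloor$ and $t = \lfloor mK'\rfloor$, both of which lie in the monitoring window $\{m+1,\ldots,mK\}$ for large $m$ since $1 < \kappa^{\rel} < K' \le K$. It therefore suffices to show that $\|\psi_m^{(\plugin)}(\kappa, \lfloor mK'\rfloor)\|$ grows like $cm$ while the control limit $h_m(\lfloor mK'\rfloor)$ grows only like $\sqrt{m}$, so that their difference diverges.

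For the lower bound on the chart statistic, I would first replace the sequentially re-estimated plugin values $\hat\theta_{m,i-1}$ appearing in $\psi_m^{(\plugin)}$ by their deterministic limits $\bar{\hat\theta}((i-1)/m)$. A first-order Taylor expansion of $\nabla_\delta \log p$ in $\theta$ bounds the $i$-th replacement error by $\|\nabla_\theta\nabla_\delta \log p\|_\infty \cdot \|\hat\theta_{m,i-1} - \bar{\hat\theta}((i-1)/m)\|_2$, which is $O_p(1/\sqrt{m})$ uniformly over $i$ by Assumptions~\ref{assum:consist_alt} and \ref{assum:bounded_alt}; summing over the $O(m)$ post-change indices yields a total replacement error of order $O_p(\sqrt{m})$. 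The resulting sum $\sum_{i=\kappa}^{\lfloor mK'\rfloor} \nabla_\delta \log p(Y_i \mid Z_i; \bar{\hat\theta}((i-1)/m), 0)$ then splits into a conditional-mean-zero fluctuation, which is $O_p(\sqrt{m})$ by Assumption~\ref{assum:gauss_alt}, and a drift term whose norm is at least $cm$ asymptotically by Assumption~\ref{assum:score_nonzero}. Collecting these pieces gives $C_m^{(\plugin)}(\lfloor mK'\rfloor) \ge \|\psi_m^{(\plugin)}(\kappa, \lfloor mK'\rfloor)\| \ge cm - O_p(\sqrt{m})$, which diverges to $+\infty$.

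It remains to argue that the control limit is negligible relative to this drift. The DCLs are calibrated to the null distribution of the chart statistic via the parametric bootstrap in Algorithm~\ref{algo:dcl}, and Theorem~\ref{thrm:score_conv} shows that under the null $m^{-1/2}\psi_m^{(\plugin)}$ converges weakly to a tight Gaussian-process functional. Consequently $m^{-1/2} C_m^{(\plugin)}$ converges to a nondegenerate limit under the null, so any fixed-level quantile of this distribution—and hence the bootstrap-calibrated $h_m(\lfloor mK'\rfloor)$—is of order $O_p(\sqrt{m})$. Combining, $C_m^{(\plugin)}(\lfloor mK'\rfloor) - h_m(\lfloor mK'\rfloor) \ge cm - O_p(\sqrt{m}) \to +\infty$, so with probability tending to one an alarm fires at or before time $\lfloor mK'\rfloor \le mK$, which is the claim.

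I expect the main obstacle to be the uniform control of the plugin-replacement error: one must show that swapping the continually re-estimated $\hat\theta_{m,i-1}$ for its deterministic limit across all $O(m)$ post-change indices contributes only $O_p(\sqrt{m})$, which leans on the uniform $\sqrt{m}$-rate of Assumption~\ref{assum:consist_alt} together with the bounded mixed derivative of Assumption~\ref{assum:bounded_alt}. A secondary delicate point is ensuring that the conditional-mean centering used in the fluctuation bound of Assumption~\ref{assum:gauss_alt} is compatible with the unconditional drift quantified in Assumption~\ref{assum:score_nonzero}, so that the random conditional means do not cancel the $cm$ signal.
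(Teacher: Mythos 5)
Your proposal is correct and follows essentially the same route as the paper's proof: lower-bound the chart statistic by $\|\psi_m^{(\plugin)}(\lfloor m\kappa^{\rel}\rfloor, \lfloor mK'\rfloor)\|$, decompose it into a drift of order $m$ (Assumption~\ref{assum:score_nonzero}), a centered fluctuation of order $\sqrt{m}$ (Assumption~\ref{assum:gauss_alt}), and a Taylor-expansion remainder from replacing $\hat\theta_{m,i-1}$ by $\bar{\hat\theta}((i-1)/m)$ controlled at order $\sqrt{m}$ via Assumptions~\ref{assum:consist_alt} and \ref{assum:bounded_alt}. You are in fact slightly more explicit than the paper in justifying that the bootstrap-calibrated control limit is only $O_p(\sqrt{m})$, a step the paper leaves implicit.
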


\begin{proof}
	By the definition of $C_{m}^{(\plugin)}(t)$, suffices to prove that
	\[
	\frac{1}{\sqrt{m}}\psi_{m}^{(\plugin)}(\lfloor m \kappa \rfloor,\lfloor m K' \rfloor)=\frac{1}{\sqrt{m}}\sum_{i=\lfloor m \kappa \rfloor}^{\lfloor m K' \rfloor}\nabla_{\delta}\log p\left(Y_{i}\mid Z_{i};\hat{\theta}_{m,i-1},\delta_{0}\right)
	\]
	goes to infinity.
	Consider the following decomposition
	\begin{align}
	\begin{split}
	\frac{1}{\sqrt{m}}\psi_{m}^{(\plugin)}(\lfloor m \kappa \rfloor,\lfloor m K' \rfloor)=
	& \frac{1}{\sqrt{m}}\sum_{i=\lfloor m \kappa \rfloor}^{\lfloor m K' \rfloor}
	\E_{\theta_{0},\delta_{1}}\left[\nabla_{\delta}\log p\left(Y_{i}\mid Z_{i};\bar{\hat{\theta}}\left(\frac{i-1}{m}\right ),\delta_{0}\right)\right] \\
	& +\frac{1}{\sqrt{m}}\sum_{i=\lfloor m \kappa \rfloor}^{\lfloor m K \rfloor}
	\left\{
	\nabla_{\delta}\log p\left(Y_{i}\mid Z_{i};\bar{\hat{\theta}}\left(\frac{i-1}{m} \right ),\delta_{0}\right)
	-\mathbb{E}_{\theta_{0},\delta_{1}}\left[\nabla_{\delta}\log p\left(Y_{i}\mid Z_{i};\bar{\hat{\theta}}\left(\frac{i-1}{m}\right),\delta_{0}\right)\right]
	\right\}
	\\
	& + R_m
	\end{split}
	\label{eq:plugin_alt}
	\end{align}
	The first term on the right hand side must diverge to infinity by Assumption~\ref{assum:score_nonzero}.
	Per Assumption~\ref{assum:gauss_alt}, the second term is $O_{p}(1)$.
	Per Assumption~\ref{assum:consist_alt} and Taylor's theorem, the remainder $R_m$ satisfies
	\begin{align}
	\lim_{m\rightarrow \infty} \left \|R_m \right \|_2
	\le 
	\lim_{m\rightarrow \infty}
	\frac{c_1}{\sqrt{m}}
	\sup_{\{\tilde{\theta}_{m,i-1}: i= \lfloor m\kappa \rfloor, \cdots \lfloor mK'\rfloor \} \in B}
	\left \|
	\sum_{i=\lfloor m \kappa \rfloor}^{\lfloor m K' \rfloor}\nabla_{\theta}\nabla_{\delta}\log p\left(Y_{i}\mid Z_{i};
	\tilde{\theta}_{m,i-1},\delta_{0}\right)\left(\hat{\theta}_{m,i-1}-\bar{\hat{\theta}}\left(\frac{i-1}{m}\right )\right)
	\right \|_2.
	\end{align}
	Combined with Assumption \ref{assum:bounded_alt}, we have that
	\begin{align}
	\lim_{m\rightarrow \infty} \left \|R_m \right \|_2
	\le 
	\lim_{m\rightarrow \infty} 
	c_2\sqrt{m}
	\max_{i = \lfloor m \kappa \rfloor, \cdots, \lfloor m K' \rfloor }
	\left\Vert
	\hat{\theta}_{m,i-1}-\bar{\hat{\theta}}\left (\frac{i-1}{m} \right)
	\right\Vert_2
	= O_{p}(1).
	\end{align}
	As such, the right hand side of \eqref{eq:plugin_alt} diverges to infinity, which means its left hand side must also diverge to infinity.
	Thus we have our desired result.
\end{proof}

\section{Implementation details for the score-based CUSUM}
\label{sec:implement}
There are a number of implementation decisions to make.
First, the number of sequences $B$ should be set to a value large enough such that the estimated DCLs converge.
In our simulations, we chose $B$ so that the chart statistic for five or more bootstrapped sequences exceeded the DCL at each time step.
Next, one can maximize statistical power by tuning the shape of the alpha-spending function.
Here we simply use a linear alpha-spending function, but future work may explore nonlinear functions instead.
Finally, our theoretical results allow for monitoring in a fully sequential manner or in batches of observations.
We found that batching had a negligible impact on detection delay and improved both computational efficiency and convergence to the asymptotic distribution.
As such, we recommend setting the batch size so a significant change is unlikely to occur within a batch.
In our experiments, the batch size is set to 10.

\section{Bayesian changepoint monitoring}
In this section, we briefly review Bayesian changepoint monitoring and discuss its implementation.

To determine if a structural change has occurred, the chart statistic in Bayesian monitoring is the posterior probability of there having been a change, i.e. $\hat{C}^{\bayes}(t)=\Pr\left(\kappa \le t\mid Y_1, \cdots, Y_{t}, Z_1, \cdots, Z_t\right)$.
The operating characteristics of a Bayesian procedure is defined by its probability of firing a false alarm $\Pr\left(\kappa>\hat{T}^{\bayes}\right)$, where $\hat{T}^{\bayes}$ is the alarm time and the probability is marginalized over the prior.
One can show that the static control limit of $1 - \alpha$ indeed controls the false alarm probability at level $\alpha$, albeit conservatively (see \citep{Tartakovsky2014-yq}, page 318).

The performance of the Bayesian monitoring procedue is sensitive to the choice of the prior.
Given the minimax optimality of the Shiryaev-Roberts procedure \citep{Shiryaev1963-xq}, we use a modified geometric distribution for the prior of $\kappa$, in which
\begin{align}
\pi(\kappa = t) \propto p (1 - p)^{t - 1} \quad \forall t = m+1,\cdots,mK
\end{align}
with $p=1/mK$ and the probability of there being no changepoint is set to $0.5$.
We assume a normal prior for $\theta$ with the mean and covariance matrix set to the results from maximum likelihood estimation on the non-contaminated data.
We also place a normal prior for $\delta$ with mean zero and a diagonal covariance matrix.
In the simulations, we set the diagonal matrix so that the mean norm of $\delta$ in the prior is close to that of the actual shift.
In practice, such information is not known and one must rely on prior knowledge.

In this paper, we use Hamiltonian Monte Carlo (HMC) implemented using Stan \citep{Carpenter2017-ze}.
While this was sufficient for running experiments, there are a number of caveats when running HMC.
First, HMC is not recommended for long-term monitoring, because it runs posterior inference from scratch for every new observation.
A potential solution is to perform Bayesian filtering with a Laplace approximation \citep{McCormick2012-sg, Feng2022-mk}, but existing methods have yet to be specialized for detecting a single changepoint.
Second, HMC requires the model to be fully differentiable, so it is not guaranteed to provide valid inference for the risk shift model \eqref{eq:mdl_tc}.

\section{Simulation details}

For each simulation, we generate a $p'$-random vector $X_t$ and random variables $\tilde{X}_t$ and $U_t$.
All these variables are drawn independently from the uniform distribution from -1 to 1.
The outcome $Y_t(0)$ was generated using either \eqref{eq:mdl_de} or \eqref{eq:mdl_tc} with $Z_t = (X_t, \tilde{X}_t, U_t, 1)$.
Treatments were assigned using one of two models.
The first is a logistic regression model with $(\hat{f}_t(X_t), X_t, \tilde{X}_t, U_t)$ with coefficients and intercept denoted by $\gamma_{\LR}$.
The second sets treatment to $A_t = \max(A_{t}^{(1)}, A_{t}^{(2)})$, where $A_{t}^{(1)}$ and $A_{t}^{(2)}$ are generated using logistic regression models with inputs $(\hat{f}_t(X_t), X_t, \tilde{X}_t, U_t)$, with coefficients and intercept denoted by $\gamma^{(1)}$ and $\gamma^{(2)}$, respectively.
The model parameters used to generate outcomes and treatment assignments are given in Table~\ref{table:sim_deets}.

\begin{landscape}
	\begin{table}
		\centering
		\begin{tabular}{c|p{2in}|p{3in}|p{2.6in}}
			Section & Experiment(s) & Outcome model & Treatment model  \\
			\toprule
			\ref{sec:sim_false} & CE with respect to $\hat{f}_t(X_t)$  &
			\eqref{eq:mdl_de} with
			$\theta = (2,1,1,1,\vec{0}_4,0, 0,0)$ and $\delta = \vec{0}$ &
			$\gamma_{\LR} = (0.3, \vec{0}_8,0,0,0)$ up to $t = mK/2$.\newline $\gamma_{\LR} = (0.6, \vec{0}_8,0,0,0)$ after $t = mK/2$.\\
			\midrule
			\ref{sec:sim_false} & CE with respect to $(\hat{f}_t(X_t), \tilde{X}_t)$  &
			\eqref{eq:mdl_de} with
			$\theta = (2,1,1,1,\vec{0}_4,1, 0,0)$ and $\delta = \vec{0}$ &
			$\gamma_{\LR} = (0.3,\vec{0}_8,0.1,0,0)$ prior to $t = mK/2$.\newline $\gamma_{\LR} = (0.6, \vec{0}_8,0.2,0,0)$ after $t = mK/2$.\\
			\midrule
			\ref{sec:sim_false} & TC with respect to $\hat{f}_t(X_t)$  &
			\eqref{eq:mdl_tc} with
			$\theta = (2,1,1,1,\vec{0}_4,0,1,0)$ and $\delta = \vec{0}$ &
			$\gamma^{(1)} = (0,\vec{0}_8,0,1,-2)$ at all time points.\newline
			$\gamma^{(2)} = (0.2, \vec{0}_8,0,0,0)$ up to $t = mK/2$.\newline
			$\gamma^{(2)} = (0.4, \vec{0}_8,0,0,0)$ after $t = mK/2$.\\
			\midrule
			\ref{sec:sim_false} & TC with respect to $(\hat{f}_t(X_t), \tilde{X}_t)$  &
			\eqref{eq:mdl_tc} with
			$\theta = (2,1,1,1,\vec{0}_4,1,1,0)$ and $\delta=\vec{0}$ &
			$\gamma^{(1)} = (0,\vec{0}_8,0,1,-2)$ at all time points.\newline
			$\gamma^{(2)} = (0.2, \vec{0}_8,0.3,0,0)$ up to $t = mK/2$.\newline
			$\gamma^{(2)} = (0.4, \vec{0}_8,0.6,0,0)$ after $t = mK/2$.\\
			\midrule
			\ref{sec:sim_retrain} &  Model retrained using ridge-penalized logistic regression or gradient boosted trees & 
			\eqref{eq:mdl_de} with
			$\theta = (2,1,1,\vec{0}_{47},0,0,0)$ and $\delta=\vec{0}$ &
			$\gamma_{\LR} = (0.5, \vec{0}_{50},0,0,0)$\\
			\midrule
			\ref{sec:magnitude} &
			Big and small shifts
			&
			\eqref{eq:mdl_de} with
			$\theta = (2,1,1,1,\vec{0}_4,0, 0,0)$ and
			$\delta = (-1.6,-0.8,-0.8,-0.8,\vec{0}_4,0, 0,0)$ or $\delta = (-1,-0.5,-0.5,-0.5,\vec{0}_4,0, 0,0)$  for the big or small shifts, respectively.
			&
			$\gamma_{\LR} = (0.15, \vec{0}_{8},0,0,0)$
			\\
			\midrule
			\ref{sec:trust}
			&
			Risks either shifted symmetrically or only among those with high-risk.&
			\eqref{eq:mdl_de} with
			$\theta = (2,1,1,1,\vec{0}_4,0, 0,0)$ and $\delta = (-1,-0.5,-0.5,-0.5,\vec{0}_4,0, 0,0)$ or $\delta = (-1,-0.5,-0.5,-0.5,\vec{0}_4,0, 0,-0.75)$ for symmetric- or high-risk shifts, respectively.
			&
			No-trust: $\gamma_{\LR} = (0.01, \vec{0}_{8},0,0,0)$\newline
			Calibrated-trust: $\gamma_{\LR} = (1, \vec{0}_{8},0,0,0)$\newline
			Over-trust: $\gamma_{\LR} = (5, \vec{0}_{8},0,0,0)$\newline
			\\
			\midrule
			\ref{sec:violate}
			&
			Violations of the time-constant selection bias assumption are introduced at time $t'$
			&
			\eqref{eq:mdl_tc} with
			$\theta = (2,1,1,1,\vec{0}_4,0, 0,0)$ and $\delta = (-0.1, -0.02, \vec{0}_6,0, 0,0)$&
			$\gamma^{(1)} = (0,\vec{0}_8,0,1,-1)$ up to $t'$.\newline
			$\gamma^{(1)} = (-0.5,\vec{0}_8,0,1,-1)$ after $t'$.\newline
			$\gamma^{(2)} = (0.8, \vec{0}_8,0,0,0)$ at all time points.
		\end{tabular}
		\caption{
			Model parameters used to generate outcomes and treatment assignments in the simulations.
		}
		\label{table:sim_deets}
	\end{table}
\end{landscape}
\restoregeometry

\section{Na\"ive monitoring of the misclassification rate}

Here we present an additional simulation that compares the false alarm rate of score-based monitoring with a na\"ive CUSUM procedure that monitors the overall misclassification rate to our proposed score-based CUSUM procedure that monitors the conditional distribution of $Y_t(0)|\hat{f}_t(X_t)$.
We assume conditional exchangeability with respect to $\hat{f}_t(X_t)$.
The outcome is generated per \eqref{eq:mdl_de} with $Z_t = X_t$ and $\delta = (2,1,1,1,\vec{0}_4,0)$.
Treatment is assigned using a logistic regression model with input as $\hat{f}_t(X_t)$ with parameter $\gamma_{\LR} = (1,-0.5)$ up to time $t=200$ and $\gamma_{\LR} = (5,-2.5)$ thereafter.
Rather than fitting a risk prediction model, we fit a locked binary classifier for simplicity and classify any observation to be positive if the predicted risk exceeds 0.7.

As shown in Figure~\ref{fig:naive}, the na\"ive CUSUM has a substantially inflated false alarm rate because the overall misclassification rate, without further adjustment, is sensitive to shifts in clinician trust.
On the other hand, the proposed score-based CUSUM procedure controls the false alarm rate at the desired level.
\begin{figure}
	\centering
	\includegraphics[width=0.3\linewidth]{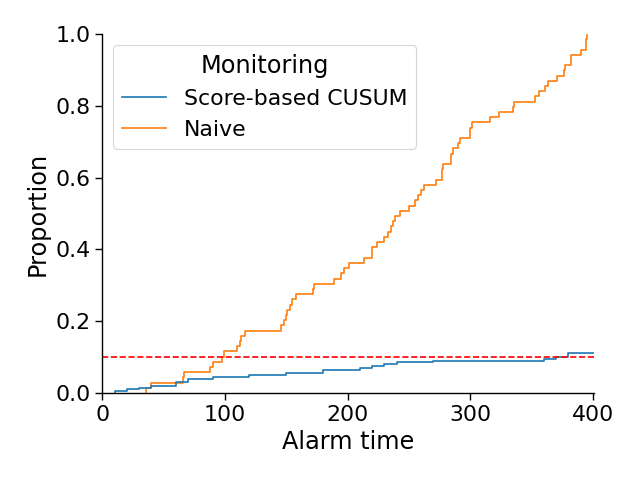}
	\caption{Comparison of na\"ive CUSUM procedure that monitors the unadjusted overall misclassification rate versus score-based CUSUM monitoring of the conditional distribution $Y_t(0)|\hat{f}_t(X_t)$}
	\label{fig:naive}
\end{figure}

\end{document}